\definecolor{lightblue}{HTML}{2970CC}
\definecolor{lightpurple}{HTML}{673147}
\definecolor{darkcontrarycolor}{HTML}{004C4C}
\definecolor{ForestGreen}{HTML}{FF5733}
\definecolor{myred}{HTML}{AA4A44}
\DeclareRobustCommand{\qed}{%
\usepackage{thmtools}
\usepackage{thm-restate}
  \ifmmode \mathqed
  \else
    \leavevmode\unskip\penalty9999 \hbox{}\nobreak\hfill
    \quad\hbox{\qedsymbol}%
  \fi
}
\newcommand{\linkcolor}{blue!70!black}
    \newcommand{\thmcolordark}{red!30!black}
\newcommand{\thmbolddark}[1]{{\color{\thmcolordark}\textbf{#1}}}
\DeclareMathSymbol{\shortminus}{\mathbin}{AMSa}{"39}
\Crefname{equation}{Eq.}{Eqs.}
\Crefname{assumption}{Assumption}{Assumptions}
\Crefname{condition}{Condition}{Conditions}
\Crefname{claim}{Claim}{Claims}
\Crefname{property}{Property}{Properties}
\Crefname{construction}{Construction}{Constructions}
\newcommand{\R}{\mathbb{R}}
\numberwithin{equation}{section}
\newcommand{\eye}{\mathbf{I}}
\newcommand{\rmd}{\mathrm{d}}
\newcommand{\ignore}[1]{}
\declaretheoremstyle[
    headformat=\normalfont\textcolor{\thmcolordark}{\bfseries\NAME\,\NUMBER}\NOTE,%
    notefont={\normalfont\textcolor{\thmcolordark}{\bfseries}}, 
    notebraces={}{},
    bodyfont=\normalfont\itshape,
    spaceabove = 6pt,
    spacebelow = 6pt,
    ]{coloredthmversion}
\declaretheoremstyle[
    headformat=\normalfont\textcolor{\thmcolordark}{\bfseries\NAME\,\NUMBER}\NOTE,%
    bodyfont=\normalfont\itshape,
    spaceabove = 6pt,
    spacebelow = 6pt,
    ]{coloredthm}
\declaretheoremstyle[
    headformat=\normalfont\textcolor{\thmcolordark}{\bfseries\NAME\,\NUMBER}\NOTE,%
    bodyfont=\normalfont,
    spaceabove = 6pt,
    spacebelow = 6pt,
    ]{coloreddef}
\theoremstyle{coloredthmversion}
  \theoremstyle{coloredthm}
  \newtheorem{theorem}{Theorem}
  \newtheorem{lemma}{Lemma}[section]
  \newtheorem{corollary}{Corollary}[section]
  \newtheorem{proposition}[lemma]{Proposition}
\newtheorem*{thminformal*}{Informal Theorem}
  \newtheorem{property}{Property}[section]
\theoremstyle{coloreddef}
\newtheorem{definition}{Definition}[section]
  \newtheorem{assumption}{Assumption}[section]
  \newtheorem{condition}{Condition}[section]
\newcommand{\neutralize}[1]{\expandafter\let\csname c@#1\endcsname\count@}
\newtheorem*{theorem*}{Theorem}
\newtheorem*{lemma*}{Lemma}
\newtheorem*{corollary*}{Corollary}
\newtheorem*{proposition*}{Proposition}
\newtheorem*{claim*}{Claim}
\newtheorem*{fact*}{Fact}
\newtheorem*{observation*}{Observation}
\newtheorem*{definition*}{Definition}
\newtheorem*{remark*}{Remark}
\newtheorem*{example*}{Example}
\newtheoremstyle{named}{}{}{\itshape}{}{\bfseries}{}{.5em}{\Cref{#3} {\normalfont (informal)} }
{}
\theoremstyle{named}
\theoremstyle{plain}
\DeclareMathAlphabet{\mathbfsf}{\encodingdefault}{\sfdefault}{bx}{n}
\newcommand{\E}{\mathbb{E}}
\newcommand{\calL}{\mathcal{L}}
\newcommand{\sg}[1]{\mathsf{sg}\left(#1\right)}
\def\eqref#1{equation~\ref{#1}}
\def\Eqref#1{Equation~\ref{#1}}
\def\1{\bm{1}}
\DeclareMathAlphabet{\mathsfit}{\encodingdefault}{\sfdefault}{m}{sl}
\SetMathAlphabet{\mathsfit}{bold}{\encodingdefault}{\sfdefault}{bx}{n}
\newcommand{\pdata}{p_{\rm{data}}}
\def\ddefloop#1{\ifx\ddefloop#1\else\ddef{#1}\expandafter\ddefloop\fi}
\def\ddef#1{\expandafter\def\csname bb#1\endcsname{\ensuremath{\mathbb{#1}}}}
\def\ddefloop#1{\ifx\ddefloop#1\else\ddef{#1}\expandafter\ddefloop\fi}
\def\ddef#1{\expandafter\def\csname frak#1\endcsname{\ensuremath{\mathfrak{#1}}}}
\def\ddefloop#1{\ifx\ddefloop#1\else\ddef{#1}\expandafter\ddefloop\fi}
\def\ddef#1{\expandafter\def\csname fr#1\endcsname{\ensuremath{\mathfrak{#1}}}}
\def\ddefloop#1{\ifx\ddefloop#1\else\ddef{#1}\expandafter\ddefloop\fi}
\def\ddef#1{\expandafter\def\csname eul#1\endcsname{\ensuremath{\EuScript{#1}}}}
\def\ddefloop#1{\ifx\ddefloop#1\else\ddef{#1}\expandafter\ddefloop\fi}
\def\ddef#1{\expandafter\def\csname scr#1\endcsname{\ensuremath{\mathscr{#1}}}}
\def\ddefloop#1{\ifx\ddefloop#1\else\ddef{#1}\expandafter\ddefloop\fi}
\def\ddef#1{\expandafter\def\csname b#1\endcsname{\ensuremath{\mathbf{#1}}}}
\def\ddefloop#1{\ifx\ddefloop#1\else\ddef{#1}\expandafter\ddefloop\fi}
\def\ddef#1{\expandafter\def\csname bhat#1\endcsname{\ensuremath{\hat{\mathbf{#1}}}}}
\def\ddefloop#1{\ifx\ddefloop#1\else\ddef{#1}\expandafter\ddefloop\fi}
\def\ddef#1{\expandafter\def\csname btil#1\endcsname{\ensuremath{\tilde{\mathbf{#1}}}}}
\def\ddefloop#1{\ifx\ddefloop#1\else\ddef{#1}\expandafter\ddefloop\fi}
\def\ddef#1{\expandafter\def\csname bst#1\endcsname{\ensuremath{\mathbf{#1}^\star}}}
\def\ddefloop#1{\ifx\ddefloop#1\else\ddef{#1}\expandafter\ddefloop\fi}
\def\ddef#1{\expandafter\def\csname bst#1\endcsname{\ensuremath{\mathbf{#1}^\star}}}
\def\ddefloop#1{\ifx\ddefloop#1\else\ddef{#1}\expandafter\ddefloop\fi}
\def\ddef#1{\expandafter\def\csname bhat#1\endcsname{\ensuremath{\hat{\mathbf{#1}}}}}
\def\ddefloop#1{\ifx\ddefloop#1\else\ddef{#1}\expandafter\ddefloop\fi}
\def\ddef#1{\expandafter\def\csname b#1\endcsname{\ensuremath{\mathbf{#1}}}}
\def\ddefloop#1{\ifx\ddefloop#1\else\ddef{#1}\expandafter\ddefloop\fi}
\def\ddef#1{\expandafter\def\csname barb#1\endcsname{\ensuremath{\bar{\mathbf{#1}}}}}
\def\ddef#1{\expandafter\def\csname c#1\endcsname{\ensuremath{\mathcal{#1}}}}
\def\ddef#1{\expandafter\def\csname h#1\endcsname{\ensuremath{\widehat{#1}}}}
\def\ddef#1{\expandafter\def\csname hc#1\endcsname{\ensuremath{\widehat{\mathcal{#1}}}}}
\def\ddef#1{\expandafter\def\csname t#1\endcsname{\ensuremath{\widetilde{#1}}}}
\def\ddef#1{\expandafter\def\csname tc#1\endcsname{\ensuremath{\widetilde{\mathcal{#1}}}}}
\newcommand{\takeawaycolor}{red!40!black}
\newcommand{\takeawaybold}[1]{{\color{\takeawaycolor} {\textbf{#1}}}}
\newcommand{\ballkr}[1][r]{\cB_{k}(r)}
  \newcommand{\iclrpar}[1]{\takeawaybold{#1}}
  \newcommand{\iclrpar}[1]{\takeawaybold{#1}}
\newcommand{\ddt}{\frac{\rmd}{\rmd t}}
\newcommand{\diverg}{\mathrm{div}}
\newcommand{\Ksample}{K_{\mathrm{samp}}}
\newcommand{\Klike}{K_{\mathrm{ll}}}
\newcommand{\Lsc}{\cL_{\mathrm{u}\text{-}\textsc{sc}}}
\newcommand{\Lmf}{\cL_{\textsc{mf}}}
\newcommand{\Lfmsd}{\cL_{\textsc{vm}\text{-}\textsc{sc}}}
\newcommand{\Ldivsc}{\cL_{\mathrm{D}\text{-}\textsc{sc}}}
\newcommand{\Ldivvm}{\cL_{\mathrm{div}\text{-}\textsc{sc}}}
\newcommand{\Lscj}{\cL_{\textsc{sc}\text{-}\modelname{}}}
\newcommand{\hatz}{\hat{z}}
\newcommand{\haty}{\hat{y}}
\newcommand{\modelname}{\textsc{F2D2}}
\newcommand{\modelnamesc}{Shortcut\text{-F2D2}}
\newcommand{\modelnamemf}{MeanFlow\text{-F2D2}}
\newcommand{\modelnamelong}{fast flow joint distillation}
\title{Joint Distillation for Fast Likelihood Evaluation and Sampling in Flow-based Models}
\author{Xinyue Ai$^{2}$\thanks{Equal contribution.} \ \thanks{Work done at CMU.} , Yutong He$^{1*}$, Albert Gu$^1$, Ruslan Salakhutdinov$^1$, J. Zico Kolter$^1$, \\ \textbf{Nicholas M. Boffi$^1$, Max Simchowitz$^{1}$}\\
$^1$Carnegie Mellon University, $^2$Peking University  \\
\texttt{yutonghe@cs.cmu.edu, axy25307@stu.pku.edu.cn}
}
\begin{document}

\maketitle

\begin{abstract}
Log-likelihood evaluation enables important capabilities in generative models, including model comparison, certain fine-tuning objectives, and many downstream applications.
Yet paradoxically, 
some of today's best generative models -- 
diffusion and flow-based models -- still require hundreds to thousands of neural function evaluations (NFEs) to compute a single likelihood. 
While recent distillation methods have successfully accelerated sampling to just a few steps, they achieve this at the cost of likelihood tractability: existing approaches either abandon likelihood computation entirely or still require expensive integration over full trajectories. We present \modelnamelong{} (\modelname{}), 
a framework that simultaneously reduces the number of NFEs required for both sampling and likelihood evaluation by two orders of magnitude.
Our key insight is that in continuous normalizing flows, the coupled ODEs for sampling and likelihood are computed from a shared underlying velocity field, allowing us to jointly distill both the sampling trajectory and cumulative divergence using a single flow map. \modelname{} is modular, compatible with existing flow-based few-step sampling models, and requires only an additional divergence prediction head. Experiments demonstrate \modelname{}'s capability of achieving accurate log-likelihood with few-step evaluations while maintaining high sample quality, solving a long-standing computational bottleneck in flow-based generative models. 
As an application of our approach, we propose a lightweight self-guidance method that enables a 2-step MeanFlow to outperform a 1024 step flow matching model with only a single additional backward NFE.
\end{abstract}

\section{Introduction}
\label{sec:intro}
Log-likelihood evaluation and likelihood-based inference have long been fundamental to statistical modeling and machine learning, serving as the backbone for parameter estimation~\citep{fisher1922foundations}, model selection~\citep{akaike1974aic}, and hypothesis testing~\citep{neyman1933tests}. In the era of generative AI, the ability to efficiently evaluate log-likelihood (log-density) has become even more critical, as it directly enables key post-training techniques including reinforcement learning and preference optimization, where likelihoods are important for methods like PPO, DPO and GRPO~\citep{schulman2017proximal,ouyang2022training,rafailov2023direct,shao2024deepseekmath}. Beyond these applications, optimizing log-likelihood also encourages generative models to capture all modes of the data distribution, avoiding mode collapse that plagues adversarial approaches~\citep{razavi2019generating}. 

While likelihood evaluation is useful for modern generative modeling, the most successful generative models for images and video~\citep{rombach2022high,flux2025kontext,openai2024sora,polyak2024moviegen,google2025veo3}, namely diffusion and flow matching models, suffer from a critical weakness: computing likelihood requires prohibitively expensive iterative neural function evaluations (NFEs).
In particular, discrete-time diffusion models like DDPM~\citep{ho2020denoising,nichol2021improved} require summing up variational bounds across all timesteps, which needs hundreds to thousands of forward passes to compute a single likelihood. Similarly, continuous-time formulations like score SDE~\citep{song2020score} and flow matching~\citep{lipman2022flow,albergo2022building,liu2022flow,albergo2023stochasticinterpolantsunifyingframework} also must integrate the divergence along the learned (probability) flow ODE, which typically requires numerical integration with 100-1000 NFEs for accurate likelihood evaluation. While advanced solvers can significantly reduce NFEs~\citep{karras2022elucidating}, they fundamentally cannot escape the integration requirement and produce vastly inaccurate results when restricted to very few steps ($\leq$ 10 NFEs). This computational burden makes many likelihood-based finetuning objectives, model comparison, and downstream applications prohibitively expensive for modern diffusion/flow matching models.

Interestingly, diffusion and flow matching models faced the same NFE bottleneck for \emph{sampling} when they were first introduced, where they initially required 1000+ steps to generate a single image. Research addressing this issue has been remarkably successful, with methods that learn to skip multiple steps, either through distillation or self-consistency objectives, emerging as particularly powerful solutions~\citep{salimans2022progressive,song2023consistency,kim2023consistency,frans2024one,geng2025mean,boffi2025flowmapmatchingstochastic,boffi2025buildconsistencymodellearning}. However, despite achieving few-step sampling, most of these methods completely lose the ability to compute likelihoods, and the few methods that preserve likelihood computation (e.g.~\citet{kim2023consistency}) still require integrating over the entire trajectory with hundreds of NFEs, making fast likelihood evaluation impossible. Thus, while practitioners have solutions to fast sampling for diffusion and flow matching models, fast log-likelihood evaluation remains an unsolved problem.
Here, we show that it is possible to achieve \emph{both} at the same time.

\iclrpar{Contributions.} Our key insight is that, in flow matching and continuous normalizing flows (CNFs) in general~\citep{chen2018neural}, computing exact likelihoods requires solving coupled ODEs: the sampling trajectory $\ddt x_t = v_\theta(x_t,t)$ and the log-density evolution $\ddt\log p_t(x_t) = -\operatorname{div}(v_\theta(x_t,t))$, both depending on the same learned velocity $v_\theta$. Since the divergence term can be viewed as another output derived from the same velocity model, we can learn to distill both the flow trajectory and its corresponding divergence computation simultaneously within a single model. By jointly optimizing for both accurate few-step sampling and log-likelihood evaluation, we can potentially achieve a model that succeeds at both tasks.

Based on these insights, we propose \modelnamelong{} (\modelname{}), a simple and modular framework for jointly learning fast sampling and fast log-likelihood evaluation in flow-based models. Our key idea is to leverage the flow map framework~\citep{boffi2025buildconsistencymodellearning} to train a single model to predict both the sampling trajectory and cumulative divergence in parallel using a joint self-distillation objective, ensuring both outputs learn to skip the numerous steps in training. This makes \modelname{} plug-and-play with any CNF-based few-step sampling method like shortcut models and MeanFlow, and requires only a new divergence prediction head alongside the existing velocity prediction.
To our knowledge, \modelname{} is the first method to enable accurate few-step log-likelihood evaluation in diffusion/CNF-based generative models, solving a long-standing limitation of these frameworks.

We demonstrate that our method produces both calibrated likelihoods and high quality samples with few-step NFEs on image datasets CIFAR-10~\citep{cifar10} and ImageNet $64\times 64$~\citep{imagenet}. We show that our \modelname{} are compatible with and can be directly apply to pre-trained shortcut models, MeanFlow and a new distillation method we propose in this paper. 

As an application of our method, we introduce maximum likelihood self-guidance, a lightweight test-time intervention that uses rapid likelihood evaluation to optimize over generated samples, requiring only an additional forward and backward pass through the model.
Remarkably, we show that \modelname{} with maximum likelihood self-guidance instantiated with 2-step MeanFlow achieves
lower FID than a 1024-step flow matching model of the same size on CIFAR-10.
This proof of concept demonstrates the expanded algorithmic sandbox enabled by rapid likelihood evaluation.

\newcommand{\dt}{\rmd t}
\newcommand{\Lfm}{\cL_{\textsc{fm}}}
\newcommand{\Normal}{\cN}
\renewcommand{\eye}{I}
\newcommand{\xst}{x^\star}
\newcommand{\hatx}{\hat{x}}
\section{Background}
\label{sec:background}
Let $\pdata$ denote the data distribution with samples $x \in \mathbb{R}^d$. We consider a time variable $t \in [0,1]$ where $t=0$ corresponds to a simple noise distribution $p_0 = \Normal(0,\eye)$ and $t=1$ corresponds to the data distribution $p_1 = p_{\text{data}}$. We denote the marginal distribution at time $t$ as $p_t(x)$.

\iclrpar{Flow Matching.} 
Flow matching~\citep{lipman2022flow,albergo2022building,liu2022flow,albergo2023stochasticinterpolantsunifyingframework} is a scalable training method for generative modeling that learns a time-dependent velocity field $v_\theta: \mathbb{R}^d \times [0,1] \rightarrow \mathbb{R}^d$ to transport samples from a simple noise distribution $p_0$ to the data distribution $p_1$. 
Along a straight-line path $x_t = (1-t)x_0 + tx_1$ that linearly interpolates between a noise sample $x_0 \sim p_0$ and a data sample $x_1 \sim p_1$, it models the evolution dynamic with the ordinary differential equation ODE:
\begin{equation}
\ddt \hatx_t = v_\theta(\hatx_t, t) , \quad x_0 \sim p_0 \label{eq:ODE}
\end{equation}
where $\hatx_t$ arises from integrating the learned flow model. Since the velocity along this path is simply $x_1 - x_0$, flow matching minimizes the regression objective:
\begin{equation}
\Lfm(\theta) = \mathbb{E}_{t \sim [0,1], x_0 \sim p_0, x_1 \sim p_1} \left[ \|v_\theta(x_t, t) - (x_1 - x_0)\|^2 \right]\label{eq:Lfm}
\end{equation}
This objective encompasses diffusion models as a special case with different interpolation schemes~\citep{gao2025diffusionmeetsflow}.
For sampling, it solves the ODE from $t=0$ to $t=1$ with numerical solvers like Euler or $\operatorname{dopri5}$, and typically requires 100-1000 NFEs for high quality samples.

\iclrpar{Continuous Normalizing Flows and Likelihood Computation.}
A flow-matching model is a special case of a continuous normalizing flow (CNF) \citep{chen2018neural}, which transports data from an initial distribution $x_0 \sim p_0$ to an estimated distribution by integrating an ODE.
In the case of flow-based models, this is precisely~\Cref{eq:ODE}.
An advantage of the CNF formalism is the ability to explicitly compute likelihoods via the coupled system of ODEs:
\begin{equation}
\ddt
\begin{bmatrix}
\hatx_t\\
\log p_{t;\theta}(\hatx_t)
\end{bmatrix}
=
\begin{bmatrix}
v_\theta(\hatx_t,t)\\[2pt]
-\operatorname{div}(v_\theta(\hatx_t, t)).
\end{bmatrix}  
\label{eq:likelihood_ode_system}
\end{equation}
Above, $\operatorname{div}(v_\theta(\hatx_t, t)) = \operatorname{Tr}(\nabla_{\hatx_t} v_\theta(\hatx_t, t))$ denotes the divergence of the velocity field $v_\theta$, and $p_{t;\theta}$ represents the likelihood of $\hatx_t$ under \Cref{eq:ODE}, which we note depends on model parameter via $v_\theta$.
Integrating backwards from $t=1$ (data) to $t=0$ (noise) with initial conditions $[x_1, 0]^\top$, we obtain:
\begin{equation}
\log p_{1}(x_1) = \log p_0(\hatx_0) + \int_1^0 \operatorname{div}(v_\theta(\hatx_t, t)) dt = \log p_0(\hatx_0) - \int_0^1 \operatorname{div}(v_\theta(\hatx_t, t)) dt,
\end{equation}
where $\hatx_0$ and the intermediate $\hatx_t$'s are obtained by integrating the flow backward from $x_1$.

Likelihood evaluation is typically expensive, requiring both careful, finely-discretized integration of an ODE across time steps, and a computation of the divergence term whose exact computation (or the variance of its randomized estimator~\citep{grathwohl2018ffjord}) scales at least linearly in ambient dimension $d$. Thus, likelihood evaluation is far more computationally burdensome than sampling.

\iclrpar{Few-Step Flow-based Models.}
To address the computational expense of multiple ODE integrations in \emph{sampling}, recent few-step flow-based models~\citep{kim2023consistency,frans2024one,geng2025mean,boffi2025buildconsistencymodellearning,boffi2025flowmapmatchingstochastic} learn to directly predict the outcome of integrating the ODE in \Cref{eq:ODE} using only a small number of function evaluations (NFEs).
These methods can be viewed as sharing a common strategy of learning to predict the \emph{flow map} of the underlying ODE.
\begin{definition}[Flow Map]
    Given an ODE $\rmd x_t = v(x_t, t) \dt$, the flow map $\Phi: \mathbb{R}^d \times [0,1]^2 \rightarrow \mathbb{R}^d$ is the solution operator that maps any state at time $t$ to its corresponding state at time $s$:
\begin{equation}
\Phi(x_t,t,s) = x_t + \int_t^s v(x_\tau, \tau) d\tau = x_s \label{eq:flow_map}
\end{equation}
\end{definition}
After learning the flow map with network parameter $\theta$, one can directly perform few-step sampling by discretizing the time interval $[0, 1]$ into $K$ steps with timesteps $0 = t_0 < t_1 < \ldots < t_K = 1$, and iteratively applying the learned flow map: $\hatx_{t_{i+1}} = \Phi_\theta(\hatx_{t_i}, t_i, t_{i+1})$ for $i = 0, \ldots, K-1$, starting from $x_0 \sim p_0$. This reduces sampling from hundreds of ODE solver steps to just $K$ NFEs (typically $K < 10$), as each application of $\Phi$ directly predicts the integrated result over the interval $[t_i, t_{i+1}]$ without explicit numerical integration.

\section{Method}
\label{sec:method}

We propose to jointly accelerate both sampling and \emph{likelihood evaluation} by learning a flow-map on the joint ODE system described in in \Cref{eq:likelihood_ode_system}. Again, $p_0 = \Normal(0,I)$ is the source distribution, $p_{1} = \pdata$. $p_t$ represents the marginal distribution of the interpolant $x_t = t x_1 + (1-t)x_0$, $v$ denotes the ground truth velocity and $p_{t;\theta}$ is the distribution of $\hatx_{t}$ under the learned flow model \Cref{eq:ODE}. Our aim is to design model which supports two key capabilities:
\begin{enumerate}[itemsep=0em, topsep=0em]
    \item \thmbolddark{Fast sampling:} Draw a $\hatx_1$ from a trained flow model, using a few number of NFEs, $\Ksample < 10$.
    \item \thmbolddark{Fast likelihood evaluation:} Evaluate the log likelihood of either model samples $\hatx_1$ or data samples $x_1$ using a few number of NFEs, $\Klike < 10$.
\end{enumerate}

\subsection{\modelnamelong{} (\modelname{}): Parametrizing a Joint Flow Map}
Our key insight is that we can apply few-step flow-based models to the ODE in \Cref{eq:likelihood_ode_system} which jointly parametrizes sampling and likelihood evaluation. 
Following~\citet{boffi2025buildconsistencymodellearning,frans2024one,geng2025mean}, we adopt a linear parametrization of the flow map:
\begin{equation}
\label{eq:linear_parametrization}
\Phi_\theta(\hatx_t,t,s) = \hatx_t + (s-t)u_\theta(\hatx_t,t,s)
\end{equation}
where $u_\theta: \mathbb{R}^d \times [0,1]^2 \rightarrow \mathbb{R}^d$ predicts the average velocity that directly transports states from time $t$ to time $s$ and ideally  $u_\theta(x_t, t, s) \approx \frac{1}{s - t} \int_{t}^{s} v(x_\tau, \tau) d\tau$. With this parametrization, we recovers an estimate of the instantaneous velocity as $u_\theta(x_t,t,t)$ in the $s \to t$ limit, and obtain simple conditions for valid flow maps.
\begin{proposition}[Flow Map Conditions~\citep{boffi2025buildconsistencymodellearning}]
\label{prop:flow-map-condition}
An operator $\Phi(x,t,s) = x + (s-t)u(x,t,s)$ is a valid flow map if and only if for all $(t,s) \in [0,1]^2$ and for all $x \in \mathbb{R}^d$, $u(x,t,t) = v(x,t)$ and any of the following conditions holds: 
\begin{enumerate}[itemsep=0em]
    \item[(a)]  $\Phi$ solves the \emph{\thmbolddark{Lagrangian equation}} $\partial_s \Phi(x,t,s) = u(\Phi(x,t,s),s,s)$.
    \item[(b)]  $\Phi$ solves the \emph{\thmbolddark{Eulerian equation}} $\partial_t \Phi(x,t,s) + \nabla_{x} \Phi(x,t,s)u(x,t,t) = 0$.
    \item[(c)]  $\Phi$ satisfies the \emph{\thmbolddark{semigroup property}} $\Phi(\Phi(x,t,r),r,s) = \Phi(x,t,s)$ for $t < r < s$.
\end{enumerate}
\end{proposition}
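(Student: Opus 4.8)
The plan is to show that, given the boundary identity $u(x,t,t)=v(x,t)$, each of (a), (b), (c) is equivalent to $\Phi$ being the genuine flow map of $\dot x_\tau = v(x_\tau,\tau)$, organizing the equivalences so that most of the work reduces to one-variable ODE existence--uniqueness. Throughout I would assume the standing regularity that makes this ODE well posed (say $v$ Lipschitz in $x$ and continuous in $\tau$) together with enough smoothness of $\Phi$ and $u$ to differentiate them, and I would state this at the outset. Note that $\Phi(x,t,t)=x$ holds automatically from the linear parametrization $\Phi(x,t,s)=x+(s-t)u(x,t,s)$, which I will use repeatedly.

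\emph{Necessity.} Suppose $\Phi(x,t,s)=x_s$ where $x_\tau$ solves the ODE with $x_t=x$. Differentiating the linear parametrization in $s$ and evaluating at $s=t$ gives $u(x,t,t)=\partial_s\Phi(x,t,s)\big|_{s=t}=v(x,t)$, since $\partial_s x_s=v(x_s,s)$ and $x_t=x$. Condition (a) is then just the identity $\partial_s\Phi(x,t,s)=v(\Phi(x,t,s),s)=u(\Phi(x,t,s),s,s)$; condition (c) is the cocycle (semigroup) property of ODE flows; and condition (b) follows by differentiating in $t'$ the map $t'\mapsto\Phi(x_{t'},t',s)$, which is constant in $t'$ by the flow property, and using $\dot x_t=v(x,t)=u(x,t,t)$.

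\emph{Sufficiency.} Assume $u(x,t,t)=v(x,t)$ for all $x,t$. If (a) holds, fix $x,t$ and let $\gamma(s)=\Phi(x,t,s)$: then $\gamma(t)=x$ and $\gamma'(s)=u(\gamma(s),s,s)=v(\gamma(s),s)$, so $\gamma$ solves the defining ODE from initial condition $x$ at time $t$, hence $\gamma(s)=x_s$ by uniqueness and $\Phi$ is the flow map. If (c) holds, substituting the linear parametrization into the semigroup identity gives, for $r<s$,
\[
\frac{\Phi(x,t,s)-\Phi(x,t,r)}{s-r}=\frac{\Phi(\Phi(x,t,r),r,s)-\Phi(x,t,r)}{s-r}=u\big(\Phi(x,t,r),r,s\big),
\]
and letting $s\to r$ yields $\partial_s\Phi(x,t,s)\big|_{s=r}=u(\Phi(x,t,r),r,r)$, i.e.\ (a), reducing to the previous case. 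If (b) holds, I would run the method of characteristics: for fixed $s$, along any solution $X(\tau)$ of $\dot X=v(X,\tau)$ the chain rule gives $\tfrac{d}{d\tau}\Phi(X(\tau),\tau,s)=\partial_t\Phi+\nabla_x\Phi\,\dot X=\partial_t\Phi+\nabla_x\Phi\,u(X(\tau),\tau,\tau)=0$ by (b), so $\Phi(X(\tau),\tau,s)$ is constant in $\tau$; evaluating at $\tau=s$ (where $\Phi$ is the identity) shows $\Phi(x,t,s)=x_s$ whenever $X(t)=x$.

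The step I expect to be the main obstacle is (b) $\Rightarrow$ flow map: it is the only part genuinely involving a transport PDE, and some care is needed to run the characteristic curves $X(\tau)$ both forward and backward in time (to cover $t<s$ and $t>s$) and to justify differentiating $\Phi$ along them — this is where the smoothness hypotheses are actually used. Everything else collapses to the observations $\Phi(x,t,t)=x$, the boundary identity $u(x,t,t)=v(x,t)$, and uniqueness for a scalar-time ODE. I would close by assembling the implications: necessity gives ``flow map $\Rightarrow$ each of (a),(b),(c)'', while sufficiency gives ``(a) $\Rightarrow$ flow map'', ``(c) $\Rightarrow$ (a)'', and ``(b) $\Rightarrow$ flow map'', which together yield the claimed equivalence.
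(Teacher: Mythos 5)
Your proposal is correct. Note that the paper does not actually prove this proposition itself: it is imported verbatim from \citet{boffi2025buildconsistencymodellearning}, and even the joint-system analogue in Appendix A is discharged by citing Proposition 2.2 of that reference. Your blind argument therefore supplies the content the citation stands in for, and it follows the standard route: necessity via differentiating the definition of the flow map; (a) $\Rightarrow$ flow map by ODE uniqueness applied to $\gamma(s)=\Phi(x,t,s)$; (c) $\Rightarrow$ (a) by exploiting the linear parametrization to turn the semigroup identity into a difference quotient; and (b) $\Rightarrow$ flow map by the method of characteristics, using that $\Phi(\cdot,s,s)$ is the identity. All three reductions are sound under the regularity you state up front. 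The only loose end, which is inherited from the statement itself rather than introduced by you, is that condition (c) is only imposed for $t<r<s$, so your difference-quotient limit is one-sided and the semigroup route pins down only the forward-in-time flow; the Lagrangian and Eulerian routes, as you handle them, cover both orientations of $(t,s)$, which matters here since likelihood evaluation integrates the map backward from $t=1$ to $t=0$.
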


Let $z_t = \log p_{t}(x_t) \in \R$ and $\hatz_t = \log p_{t;\theta}(\hatx_t) \in \R$ denote the log likelihood, we can then separately parametrize the flow maps for the two subsystems in~\cref{eq:likelihood_ode_system} as
\begin{equation}
\label{eq:two_flow_maps}
\begin{aligned}
   \Phi_{X;\theta_X}(\hatx_t,t,s) &= \hatx_t + (s-t)u_{\theta_X}(\hatx_t,t,s),\\ 
   \Phi_{Z;\theta_Z}(\hatx_t,\hatz_t,t,s) &= \hatz_t + (s-t)D_{\theta_Z}(\hatx_t,t,s)
\end{aligned}
\end{equation}
Here $u_{\theta_X}(\hatx_t,t,s)$ still estimates the average velocity, and $D_{\theta_Z}(x_t,t,s)$ approximates the average divergence $D_{\theta_Z}(x_t,t,s) \approx -\frac{1}{s - t}\int_{t}^{s} \operatorname{div}(v(x_\tau, \tau)) d\tau$ along the true trajectory between $t$ and $s$.

Notice that average divergence depends \emph{only} on $x_t$, not $z_t$. The fact that $x_t$ is sufficient in our parametrization follows from the joint ODE \Cref{eq:likelihood_ode_system}, where the evolution of the likelihood $z_t$ is determined by the divergence of the first flow evaluated at $x_t$. 

Therefore, denoting the joint state at time $t$ as $y_t = (x_t, z_t)^\top$, we can then parametrize the joint flow map using shared parameter $\theta$ as
\begin{equation}
    \label{eq:joint_flow_map}
    \begin{aligned}
     \Phi_{Y;\theta}(\haty_t,t,s) &= \begin{bmatrix}
       \Phi_X(\hatx_t,t,s)\\ \Phi_Z(\hatx_t,\hatz_t,t,s) 
    \end{bmatrix} = \haty_t + (s-t)f_\theta(\hatx_t,t,s),\\
    f_\theta(\hatx_t,t,s) &= \begin{bmatrix}
        u_\theta(\hatx_t,t,s)\\ D_\theta(\hatx_t,t,s)
    \end{bmatrix}
    \end{aligned}
\end{equation}
Above, the networks for $u_\theta$ and $D_\theta$ share the same backbone with separate prediction heads for their respective components. The exact architecture is described in Appendix~\ref{app:implemtation}. 

\iclrpar{Theoretical Justification.} To justify  the parameterization \Cref{eq:joint_flow_map}, we recall a  property  denoted the tangent condition by \cite{boffi2025buildconsistencymodellearning},   leveraged by \cite{kim2023consistency,geng2025mean} to  recover the instantaneous velocity and divergence in the $s\to t$ limit:
\begin{lemma}[Tangent Condition]
    \label{lem:tangent}
   The flow map $\Phi_Y(y,t,s)$ for the joint system~\Cref{eq:likelihood_ode_system} satisfies $\lim_{t \to s}\partial_s \Phi_Y(y,t,s) = f(x,s,s) = (v(x,t), -\diverg(v(x_t,t)))^\top$.
\end{lemma}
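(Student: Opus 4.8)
The plan is to treat $\Phi_Y$ as the exact solution operator of the joint ODE in \Cref{eq:likelihood_ode_system} and then invoke the fundamental theorem of calculus together with continuous dependence on the time endpoints. Concretely, write $F(x,z,\tau) := (v(x,\tau),\,-\diverg(v(x,\tau)))^\top$ for the right-hand side of \Cref{eq:likelihood_ode_system}, and note that $F$ is \emph{independent of} $z$ --- the log-density coordinate evolves by pure integration of a quantity depending only on the $x$-trajectory. For fixed initial data $y=(x,z)$ at time $t$, the map $s\mapsto \Phi_Y(y,t,s)$ is by definition the unique curve solving $\partial_s \Phi_Y(y,t,s) = F(\Phi_Y(y,t,s),s)$ with $\Phi_Y(y,t,t)=y$; this is exactly the Lagrangian form (condition (a) of \Cref{prop:flow-map-condition}) specialized to the joint system, with $f(\cdot,\tau,\tau)$ playing the role of $F(\cdot,\tau)$.

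Given this, the argument is short. By the defining ODE, $\partial_s \Phi_Y(y,t,s) = F(\Phi_Y(y,t,s),s)$ holds for every $s$ near $t$. Now send $t\to s$ with $s$ held fixed. Under the usual regularity assumptions that make the CNF well posed --- $v$ Lipschitz in $x$ and continuous in $t$, so that solutions exist, are unique, and depend continuously on the time arguments --- we get $\Phi_Y(y,t,s)\to y$ as $t\to s$. Since we additionally assume $v \in C^1$ so that $\diverg v = \Tr(\nabla v)$ is continuous, $F$ is continuous, and therefore
\begin{equation}
\lim_{t\to s}\partial_s \Phi_Y(y,t,s) = F(y,s) = (v(x,s),\,-\diverg(v(x,s)))^\top,
\end{equation}
which is precisely $f(x,s,s)$ as claimed.

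It remains to connect this to the linear parametrization $\Phi_Y(y,t,s) = y + (s-t)f(x,t,s)$ with $f=(u,D)^\top$, the form actually used in \Cref{eq:joint_flow_map}. Differentiating, $\partial_s \Phi_Y(y,t,s) = f(x,t,s) + (s-t)\,\partial_s f(x,t,s)$; assuming $f$ is $C^1$ in a neighborhood of the diagonal $\{s=t\}$, the second term vanishes in the limit, so $\lim_{t\to s}\partial_s\Phi_Y(y,t,s) = f(x,s,s)$. Comparing with the previous display identifies $f(x,s,s) = (v(x,s),\,-\diverg(v(x,s)))^\top$, i.e.\ $u(x,s,s)=v(x,s)$ and $D(x,s,s) = -\diverg(v(x,s))$ --- exactly the statement that the parametrization recovers the instantaneous velocity and divergence in the $s\to t$ limit.

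I do not expect a genuine obstacle here: the content of the lemma is the elementary fact that the $s$-derivative of a flow map at the diagonal is the generating vector field, and the only care needed is in stating the regularity hypotheses cleanly (enough smoothness of $v$ for $\diverg v$ to be continuous, and enough for the joint ODE to have unique, continuously-dependent solutions). The one conceptually non-obvious point worth flagging in the write-up is that the $z$-coordinate requires no separate treatment, because $F$ does not depend on $z$; this is also what justifies parametrizing $D_\theta$ as a function of $\hatx_t$ alone in \Cref{eq:two_flow_maps}.
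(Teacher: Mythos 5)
Your proof is correct. The paper itself does not spell out an argument here: its ``proof'' is a one-line deferral to Lemma~2.1 of \citet{boffi2025buildconsistencymodellearning}, which is exactly the tangent condition you derive. Your write-up is the direct, self-contained version of that cited fact --- differentiate the defining Lagrangian ODE, use continuity of $F$ and of the flow in the time endpoints to pass to the limit $t\to s$, and then match against the linear parametrization $y+(s-t)f(x,t,s)$ to read off $u(x,s,s)=v(x,s)$ and $D(x,s,s)=-\diverg(v(x,s))$ --- so it buys a standalone proof at the cost of having to state the regularity hypotheses explicitly, which you do. Your observation that the right-hand side is independent of $z$, so the $z$-coordinate needs no separate treatment and $D$ may be parametrized by $x$ alone, matches the justification given in the paper's main text for \Cref{eq:two_flow_maps}.
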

We can then characterize valid joint flow maps under our parametrization as the following:
\begin{proposition}[Characterization of the Joint Flow Map]
\label{prop:main:flow_char}
   Let $\Phi_Y(y,t,s) = y + (s-t)f(x,t,s)$ satisfy $f(x,s,s) = (v(x,t), -\diverg(v(x_t,t)))^\top$ denotes the dynamics for the joint sampling and likelihood system~\cref{eq:likelihood_ode_system}.
   Then, $\Phi_Y(y,t,s)$ is the flow map for the joint system if and only if $\forall \:\: (y, t, s) \in \R^{d+1} \times [0, 1]^2$, any of the following conditions are satisfied:
    \begin{enumerate}[itemsep=0em]
        \item[(a)]  $\Phi_Y$ solves the \emph{\thmbolddark{Lagrangian equation}} $\partial_s \Phi_Y(y,t,s) = f(\Phi_Y(y,t,s),s,s)$.
        \item[(b)]  $\Phi$ solves the \emph{\thmbolddark{Eulerian equation}} $\partial_t \Phi_Y(y,t,s) + \nabla_y \Phi_Y(y,t,s)f(y,t,t) = 0$.
b        \item[(c)]  $\Phi$ satisfies the \emph{\thmbolddark{semigroup property}} $\Phi_Y(y,t,s) = \Phi_Y(\Phi_Y(y,t,r),r,s)$  for $t < r < s$.
    \end{enumerate}
\end{proposition}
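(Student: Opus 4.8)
The plan is to deduce \Cref{prop:main:flow_char} from \Cref{prop:flow-map-condition} by viewing the coupled system \Cref{eq:likelihood_ode_system} as a single time-dependent ODE on the augmented state space $\R^{d+1}$. Writing $y = (x,z)^\top$ and $F(y,t) := \big(v(x,t),\,-\diverg(v(x,t))\big)^\top$, the joint system is just $\ddt \haty_t = F(\haty_t, t)$, an ODE in $d+1$ variables whose right-hand side happens not to depend on the likelihood coordinate $z$. I would first record the structural consequence of this $z$-independence: since the $x$-subsystem is closed, uniqueness of solutions gives the factorization
\begin{equation}
\Phi_Y^{\star}(y,t,s) \;=\; \Big(\Phi_X(x,t,s),\ \ z - \int_t^s \diverg\!\big(v(\Phi_X(x,t,\tau),\tau)\big)\,\rmd\tau\Big)^{\!\top},
\end{equation}
where $\Phi_X$ denotes the flow map of $\ddt\hatx_t = v(\hatx_t,t)$. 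In particular $\Phi_Y^{\star}(y,t,s) - y$ depends on $y$ only through its $x$-component, so the linear parametrization $\Phi_Y(y,t,s) = y + (s-t) f(x,t,s)$ of \Cref{eq:joint_flow_map}, in which $f$ is allowed to depend on $x$ but not on $z$, is rich enough to represent the true joint flow map; no valid flow map is excluded by this restriction.

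With this in place, the argument is a translation. Any $f(x,t,s)$ may be regarded as a map $\tilde f(y,t,s) := f(x,t,s)$ of the full augmented state that is constant in $z$, and then $\Phi_Y(y,t,s) = y + (s-t)\tilde f(y,t,s)$ is exactly an instance of the linear family to which \Cref{prop:flow-map-condition} applies, now for the ODE with velocity field $F$ on $\R^{d+1}$. The hypothesis $f(x,s,s) = \big(v(x,s),-\diverg(v(x,s))\big)^\top$ is precisely the tangent/consistency requirement $\tilde f(y,t,t) = F(y,t)$ of \Cref{prop:flow-map-condition}, and it is consistent with the $s \to t$ limit recorded in \Cref{lem:tangent}, namely $\lim_{t\to s}\partial_s \Phi_Y(y,t,s) = f(x,s,s)$. \Cref{prop:flow-map-condition}, invoked with state $y$ and velocity $F$, then states that $\Phi_Y$ is the flow map of the joint system if and only if one of the Lagrangian, Eulerian, or semigroup conditions holds. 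Substituting $\tilde f(\cdot,t,s) = f(x,t,s)$ back in, and using $\Phi_Y(y,t,t) = y$ and $\nabla_y \Phi_Y(y,t,t) = I$ to put the Eulerian form into the displayed shape exactly as in the unaugmented proof, yields verbatim items (a), (b), (c) of \Cref{prop:main:flow_char}.

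I expect the only genuinely non-routine step to be the first one: justifying that forcing $f$ to ignore the likelihood coordinate $z$ loses no generality, i.e.\ the factorization identity above. This rests on nothing deeper than uniqueness of ODE solutions together with the observation that the $x$-dynamics decouple from $z$, but it is the point a careful reader would want spelled out, since otherwise one might worry that some valid joint flow map fails to lie in the parametrized family. Everything after that is bookkeeping; an equally valid route is simply to re-run the proof of \Cref{prop:flow-map-condition} line by line with $x$ replaced by $y$ and $v$ by $F$, as that argument never uses the ambient dimension of the state.
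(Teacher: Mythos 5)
Your proposal is correct and matches the paper's proof, which likewise establishes the result by applying the single-system flow map characterization (\Cref{prop:flow-map-condition}, i.e.\ Proposition 2.2 of \citet{boffi2025buildconsistencymodellearning}) to the augmented ODE on $\R^{d+1}$. Your additional step justifying that the $z$-independence of $f$ loses no generality is a worthwhile detail the paper only addresses informally in the main text, but it does not change the route.
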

We provide the full analysis of the characterization in Appendix~\ref{app:characterization}. We refer to the family of algorithms which learns a joint map of this characterization as \thmbolddark{\modelnamelong{} (\modelname{})}.
Notably, we can derive four separate training objectives -- one pair for the sampling subsystem and the other pair for the likelihood subsystem -- and jointly optimizing them yields a valid flow map for~\Cref{eq:likelihood_ode_system}.
The general \modelname{} training objective is:
\begin{equation}
    \mathcal{L}_{\modelname{}}(\theta) := \mathcal{L}_{\textsc{vm}}(\theta) + \mathcal{L}_{\mathrm{u}}(\theta) + \mathcal{L}_{\mathrm{div}}(\theta) + \mathcal{L}_{\mathrm{D}}(\theta)
\end{equation}
where the first two terms optimize for the sampling flow map $\Phi_X$: $\mathcal{L}_{\textsc{vm}}$ enforces the instantaneous velocity matching (i.e. the tangent condition, which is often enforced by the flow matching loss in practice), while $\mathcal{L}_{\mathrm{u}}$ enforces one of the flow map conditions from Proposition~\ref{prop:flow-map-condition} for $\Phi_X$. Similarly, the last two terms optimize for the likelihood flow map $\Phi_Z$: $\mathcal{L}_{\mathrm{div}}$ matches the instantaneous divergence, and $\mathcal{L}_{\mathrm{D}}$ ensures that $\Phi_Z$ satisfies the conditions needed for the joint flow map $\Phi_Y$ to be valid according to the conditions in Proposition~\ref{prop:main:flow_char}. We provide the pseudocode for the general recipe of F2D2 training in Algorithm~\ref{alg:f2d2}.

\subsection{Instantiating F2D2 with Shortcut and MeanFlow}
Though our method is, in principle, compatible with any flow map-based method, we instantiate our formulation for Shortcut Models \citep{frans2024one}, based on the semigroup property, and for MeanFlow \citep{geng2025mean}, based on the Eulerian equation. We also provide the Lagrangian self-distillation (LSD)~\citep{boffi2025buildconsistencymodellearning} instantiation based on the Lagrangian equation in Appendix~\ref{app:characterization}. Pseudocode for each instantiation is provided in Appendix~\ref{app:implemtation}.
\subsubsection{Joint Shortcut: \modelnamesc{}}
\begin{algorithm}[t]
\caption{F2D2 Flow Map Training}
\label{alg:f2d2}
\begin{algorithmic}[1]
\For{each training step}
\State $x_1 \sim \pdata$, $x_0 \sim p_0$, $(t,s) \sim \mathcal{U}([0,1]^2)$
\Comment{forward-only training uses $t \leq s$}
\State $x_t \gets (1-t)x_0 + tx_1$
\State Calculate the sampling (velocity) flow map losses $\mathcal{L}_{\textsc{vm}}(\theta)$ and $\mathcal{L}_{\mathrm{u}}(\theta)$
\State Calculate the likelihood (divergence) flow map losses $\mathcal{L}_{\mathrm{div}}(\theta)$ and $\mathcal{L}_{\mathrm{D}}(\theta)$
\State $\mathcal{L}_{\modelname{}}(\theta) := \mathcal{L}_{\textsc{vm}}(\theta) + \mathcal{L}_{\mathrm{u}}(\theta) + \mathcal{L}_{\mathrm{div}}(\theta) + \mathcal{L}_{\mathrm{D}}(\theta)$
\State Update $\theta$ w.r.t. $\mathcal{L}_{\modelname{}}(\theta)$
\EndFor
\State \Return $\theta$
\end{algorithmic}
\end{algorithm}
Shortcut models~\citep{frans2024one} enforce the semigroup property (\Cref{prop:flow-map-condition} (c)) by applying it to the midpoint between timesteps $t$ and $s$. This amounts to the shortcut self-consistency loss:
\begin{equation}
\Lsc(\theta) = 
\mathbb{E}_{t,s,x_t}\left[\Vert u_\theta(x_t,t,s) - \frac{1}{2}\sg{u_\theta(x_t,t,r) + u_\theta(\Phi_{X;\theta}(x_t,t,r),r,s)}\Vert^2\right]
\end{equation}
where $\sg{\cdot}$ denotes stop-gradient. Combined with the tangent condition (at $t=s$), 
\begin{align}
    v_\theta(x,t) = u_\theta(x,t,t),\label{eq:diagonal_identity}
\end{align}
and training with the flow matching loss as $\mathcal{L}_{\text{VM}}$,
\begin{align}
    \Lfmsd(\theta) :=   \mathbb{E}_{t \sim [0,1], x_0 \sim p_0, x_1 \sim p_1} \left[ \|u_\theta(x_t, t, t) - (x_1 - x_0)\|^2 \right] \label{eq:fmsd} 
\end{align}
this self-consistency loss $\Lsc$ serves as $\mathcal{L}_\mathrm{u}$ and yields a valid flow map by enforcing both the tangent condition and the semigroup property (see Corollary~\ref{corollary:shortcut} in Appendix~\ref{app:additional_proof} for details).

We convert this method to a joint self-distillation method by introducing the two additional losses:
\begin{align}
    \Ldivvm(\theta) &:= \mathbb{E} \left[ \|D_{\theta}(x_{t}, t, t) + \diverg(u_{\theta}^-(x_t, t, t))\|^2 \right] \label{eq:flow_loss_div}\\
    \Ldivsc(\theta) &:= \mathbb{E}_{t < s} \left[\Vert D_\theta(x_t,t,s) - \frac{1}{2}\sg{D_\theta(x_t,t,r) + D_\theta(\Phi_{X;\theta}(x_t,t,r),r,s)}\Vert^2 \right] \label{eq:shortcut_loss_div}
\end{align}
The first loss proceeds by analogy to \Cref{eq:diagonal_identity}, where the correct $D_\theta$ is precisely the instantaneous velocity field in the $Z$-component, and this is precisely $\mathrm{div}(u_\theta)$ by \Cref{eq:likelihood_ode_system}. The second loss simply the enforces the semigroup property on the $Z$-component. Take then together, we arrive at the shortcut variant of \modelname{}, \modelnamesc{}, obtained by minimizing the loss
\begin{equation}
\Lscj(\theta) := \Lfmsd(\theta) + \Lsc(\theta) + \Ldivvm(\theta) + \Ldivsc(\theta). \label{eq:scobj}
\end{equation}
We demonstrate the derivation of \modelnamesc{} in Appendix~\ref{app:characterization} and provide the pseudocode for \modelnamesc{} in Algorithm~\ref{alg:sc-f2d2} and~\ref{alg:sc-distill-f2d2}.

\subsubsection{Joint MeanFlow: \modelnamemf{}}
Alternatively, MeanFlow~\citep{geng2025mean} learns the time-averaged velocity $u_\theta(x,t,s)$  by enforcing the so-called MeanFlow identity $u_\theta(x_t,s,t)=v(x_t,t)+(s-t)\,\frac{d}{dt} u_\theta(x_t,s,t)$, which we show in Corollary~\ref{corollary:meanflow} in Appendix~\ref{app:additional_proof} solves the Eulerian equation (\Cref{prop:flow-map-condition} (b)).
To our knowledge, this is the first proof of this fact. 

For efficiency, $\ddt u(x,s,t)$ can be computed as $\partial_t u(x,t,s) + \nabla_x u(x,t,s)v(x,t)$ obtained via a Jacobian–vector product (JVP). The training objective is then to enforce this identity by regressing the model’s prediction to the target implied thereby:
\begin{equation}
\Lmf(\theta)
= \mathbb{E}_{t,s,x_0,x_1}\left[\left\|u_\theta(x_t,t,s)-\sg{(s-t)(\partial_t u(x_t,t,s) + \nabla_x u(x_t,t,s)v(x_t,t))+v(x_t,t)}\right\|^2\right], \label{eq:MF}
\end{equation}
where $v(x_t,t) = x_1 - x_0$ is the ground truth instantaneous velocity of the interpolant. Importantly, \Cref{eq:MF} encapsulates both $\mathcal{L}_{\textsc{vm}}$ and $\mathcal{L}_{\mathrm{u}}$ since $\Lmf$ recovers the flow matching loss when $s=t$.
\newcommand{\Lmfj}{\mathcal{L}_{\text{MF-\modelname{}}}}
\newcommand{\Ldivmf}{\mathcal{L}_{\mathrm{div}\text{-}\textsc{mf}}}

To extend this this method to the joint system, we introduce the additional loss
\begin{align}
\small
\Ldivmf(\theta) &= \E\big[\Vert D_\theta(x_t,t,s) - \mathsf{sg}((s-t)(\partial_t D_\theta(x_t,t,s) \\ \nonumber
&+ \nabla_x D_\theta(x_t,t,s)v(x_t,t)) -\diverg(u_\theta(x_t,t,t)))\Vert^2\big],
\end{align}
By analogy to $\Lmf$, $\Ldivmf$ obviates the need for explicit divergence matching term $\mathcal{L}_{\mathrm{div}}$. \modelnamemf{} then amounts to training the objective
\begin{align}
    \Lmfj(\theta) = \Lmf(\theta) + \Ldivmf(\theta). 
\end{align}
We demonstrate the derivation of \modelnamemf{} in Appendix~\ref{app:characterization} and provide the pseudocode for \modelnamesc{} in Algorithm~\ref{alg:meanflow-f2d2}.

\subsection{Practical Design Choices}
While instantaneous velocity supervision for sampling is straightforward to obtain from data, obtaining reliable and tractable supervision for the instantaneous divergence presents significant challenges. We address these through several key practical considerations.

\iclrpar{Parameter Sharing.} Since both $X$ and $Z$ components of our joint flow map derive from the same underlying velocity field, learning to predict both components simultaneously is fundamentally learning two transformations of the same underlying dynamics. As a result, we efficiently parametrize the joint flow map using a shared backbone network with two separate prediction heads. This parameter sharing architecture ensures both outputs are derived from a consistent representation of the flow dynamics and reduces the number of parameters compared to training separate models.

\iclrpar{Hutchinson Trace Estimator.} 
Computing divergence terms $\operatorname{div}(v(x_t, t))$ requires $O(d)$ backward passes for exact computation, making training prohibitively expensive for high-dimensional data. Following standard practice~\citep{grathwohl2018ffjord,lipman2022flow,song2020score}, we employ the Hutchinson trace estimator $\operatorname{div}(v) \approx \mathbb{E}_{\epsilon \sim \mathcal{N}(0,I)}[\epsilon^\top \nabla_x v \cdot \epsilon]$
which provides unbiased estimates with only $O(1)$ computational cost per training step.

\iclrpar{Staged Training with Warm Start.} 
Since divergence supervision depends on having accurate velocity predictions, we adopt a staged training approach. In practice, we pre-train the sampling velocity component $u_{\theta}$ alone using existing flow map distillation techniques, which provides a good initialization for joint training later. Optionally, we can also pre-train a teacher flow matching model $v_\phi$ that serves as a reliable source of divergence supervision, replacing the potentially noisy $\operatorname{div}(u_{\theta}(x_t, t, t))$ with the more accurate $\operatorname{div}(v_\phi(x_t, t))$ during joint distillation. When using a pre-trained teacher model or warm-starting, we replace the flow matching loss with a teacher matching loss, which will be described next.

\iclrpar{Shortcut-Distill-\modelname{}.} 
To further improve training stability and performance, we propose Shortcut-Distill, a shortcut model variant that combines the semigroup flow map with a learned teacher instantaneous velocity. Our three-stage pipeline consists of: (1) \thmbolddark{Teacher pre-training:} Train $v_{\phi}$ using standard flow matching; (2) \thmbolddark{Shortcut-Distill:} Warm start $\theta$ with the teacher parameters and replace $\Ldivvm$ with teacher supervision: $\mathbb{E}_{t}\left[\|u_{\theta}(x_t, t, t) - v_{\phi}(x_t, t)\|^2\right]$; (3) \thmbolddark{Joint distillation:} Warm start $\theta$ from sampling distillation, add divergence head and train both components jointly.
This approach maintains the semigroup condition while leveraging a pre-trained velocity field to ensure the joint flow map are well-aligned.

\iclrpar{Adaptations to forward-only training.}
While shortcut models and MeanFlow are both consistent with the full flow-map formulation, their practical use (i.e. sampling) and the available pretrained checkpoints are forward-only: training and tuning assume $t\leq s$. Several implementation choices are exclusively designed for this regime (e.g., the specific parameters of MeanFlow’s logit-normal time sampling, shortcut models' discretized time sampling schedule, and common logarithmical time-parameterizations used in EDM/CTM-style models). Extending training to explicitly cover both $t\leq s$ and $t\geq s$ would require re-deriving or re-tuning these design choices and can break the plug-and-play compatibility with existing pretrained models and algorithms.

In practice, we use a simple first-order approximation to reuse a forward-only model for backward likelihood integration. Let $\Delta x(x_t, t, t+\Delta t)=\Delta t\,u_\theta(x_t,t,t+\Delta t)$ denote the forward predicted displacement. We approximate the backward displacement by
$\Delta x(x_t, t, t-\Delta t)\approx-\Delta x(x_t, t, t+\Delta t)$ and apply the same approximation to the log-density increment. This approximation is first-order accurate in $\Delta t$ and works well empirically at small-to-moderate step sizes. While there exist out-of-distribution inputs at $t=1$, in practice we find that the networks can generalize to these scenarios with forward-only training.

\subsection{Application: Maximum Likelihood Self-Guidance with F2D2}
Now that we have access to log-likelihood computation with few NFEs, we can explore various applications. One particularly interesting one is using the one-step divergence prediction (combined with the source distribution's log-likelihood) as a pseudo-likelihood objective for inference-time optimization. Specifically, we can optimize the initial noise $x_0$ to improve sample quality before running the sampling procedure. This approach resembles reward-based initial noise optimization for one-step generation models~\citep{eyring2024reno}, except we do not require external reward models. Instead, we obtain the guidance signal from the model's own likelihood prediction head -- effectively performing self-guidance at inference time to improve sample quality. This maximum likelihood self-guidance sampling algorithm is described in Algorithm~\ref{alg:mlsg}.

\begin{algorithm}[t]
\caption{Maximum Likelihood Self-Guidance Sampling with \modelname{}}
\label{alg:mlsg}
\begin{algorithmic}[1]
\State $x_0 \sim p_0$
\State $D \gets D_\theta(x_0,0,1)$
\State $\mathcal{L}_{\mathrm{NLL}} \gets - \log p_0(x_0) - D$
\State $x_0 \gets \mathsf{Adam}(x_0, \mathcal{L}_{\mathrm{NLL}})$ \Comment{One step Adam update w.r.t. $x_0$ optimizing $\mathcal{L}_{\mathrm{NLL}}$}
\State $t_0, \ldots, t_{\Ksample} \gets \mathsf{linspace}(0, 1, \Ksample+1)$
\For{$i = 0, \ldots, \Ksample-1$}
    \State $u \gets u_\theta(x_i, t_i, t_{i+1})$
    \State $x_{i+1} \gets x_i + (t_{i+1} - t_i)u$
\EndFor

\State \Return $x_1$
\end{algorithmic}
\end{algorithm}

\section{Related works}

\iclrpar{Likelihood computation in diffusion and flow models.}
While diffusion and flow-based models excel at sample generation, their likelihood evaluation remains computationally expensive. Discrete-time diffusion models compute likelihoods through variational bounds requiring hundreds of NFEs~\citep{ho2020denoising,nichol2021improved}. Continuous formulations enable exact likelihood via the probability flow ODE~\citep{song2020score} but require numerical integration with 100-1000 NFEs. Prior research has explored various techniques to improve likelihood estimation~\citep{grathwohl2018ffjord,song2021maximum}, but they still require many NFEs for accurate evaluation. In parallel, normalizing flows~\citep{rezende2015variational} offer exact and tractable log-likelihoods by using specialized network architecture designs and change-of-variables formula. Many recent efforts~\citep{zhai2024normalizing,ho2019flow++,chen2019residual} aim at scaling up the normalizing flow principles through better dequantization and and advance architectures.

\iclrpar{Accelerating sampling in flow-based models.}
Reducing sampling costs has been a major focus in diffusion and flow matching research. Advanced ODE solvers~\citep{karras2022elucidating,lu2022dpm} leverage the semi-linear structure of the probability flow ODE to reduce discretization error. Distillation methods~\citep{salimans2022progressive,sauer2024adversarial} and self-distillation models~\citep{song2023improved,zhou2025inductive} provide alternative solutions by training student models to match teacher trajectories or training to perform self-bootstrapping with fewer steps. In particular, consistency models~\citep{song2023consistency} and consistency trajectory models~\citep{kim2023consistency} learn direct mappings from any point along the trajectory to data.

\iclrpar{Flow map-based methods.}
Flow maps provide a general framework to model the solution operator of ODEs, enabling direct prediction of integrated trajectories~\citep{kim2023consistency,boffi2025buildconsistencymodellearning,boffi2025flowmapmatchingstochastic}. Recent works exploit this structure for few-step sampling. For example, as we have shown above, shortcut models~\citep{frans2024one} imposes semigroup property to learn the flow maps while MeanFlow~\citep{geng2025mean} and Align Your Flow~\citep{sabour2025align} enforce Eulerian conditions. While these methods successfully reduce sampling to less than 10 NFEs, they either abandon likelihood computation entirely or still require full trajectory integration for likelihood evaluation.

\section{Experiments}
\label{sec:experiment}
\subsection{Setups}
We empirically verify the effectiveness of our method on image datasets CIFAR-10~\citep{cifar10}, ImageNet $64 \times 64$~\citep{imagenet} and CelebA-64~\citep{liu2015faceattributes}. We evaluate the sample quality using Fréchet Inception Distance (FID)~\citep{heusel2017gans} on 50K generated images. The negative log-likelihood (NLL) is measured in bits per dimension (BPD) on the entire test set of CIFAR-10 and CelebA-64 and a randomly sampled 10K subset of the ImageNet test set. We compare our method against flow matching~\citep{lipman2022flow}, shortcut models~\citep{frans2024one}, Lagrangian self-distillation (LSD)~\citep{boffi2025buildconsistencymodellearning} and MeanFlow~\citep{geng2025mean} as baselines, and augment the later three for joint distillation. All models are unconditionally trained. We use $1,2,4,8$ Euler steps for sampling and likelihood evaluation. In order to facilitate fair comparison to the baselines, all shortcut and MeanFlow models are trained with $t\leq s$ while LSD models are trained with the full $(t,s) \sim \mathcal{U}([0,1]^2$. Implementation details can be found in Appendix~\ref{app:implemtation}.

\subsection{Results}
\iclrpar{CIFAR-10}
\begin{table}[t]
\centering
\caption{NLL and FID results on CIFAR-10 dataset with different numbers of Euler steps. The flow matching model here, which achieves BPD 3.12 as the NLL with 1024 steps and FID 2.60 with 200 steps, is also the teacher model we use in our Shortcut-Distill. For NLL, the closer to the teacher result (3.12 BPD) the better, and for FID, the lower the better. We denote the best results in \textbf{bold}, the second best with \underline{underlines}, the overall best results in $\boxed{\text{boxes}}$ and invalid predictions in {\color{lightgray} gray color}.}
\label{tab:cifa10}
\resizebox{\textwidth}{!}{%
\begin{tabular}{@{}ccccccccc@{}}
\toprule
\multirow{2}{*}{Method}     & \multicolumn{2}{c}{8 Steps} & \multicolumn{2}{c}{4 Steps} & \multicolumn{2}{c}{2 Steps} & \multicolumn{2}{c}{1 Step} \\ \cmidrule(l){2-9} 
                            & NLL           & FID         & NLL           & FID         & NLL          & FID          & NLL          & FID         \\ \midrule
Flow Matching               & {\color{lightgray}-9.93}         & 20.63       & {\color{lightgray}-24.01}        & 64.27       & {\color{lightgray}-52.85}       & 146.24       & {\color{lightgray}-111.19}      & 313.54      \\
Shortcut Model              & {\color{lightgray}-12.07}& 7.10        & {\color{lightgray}-28.03}& 9.63        & {\color{lightgray}-60.01}& 16.04        & {\color{lightgray}-124.15}& 27.28       \\
Shortcut-Distill (Ours)               & {\color{lightgray}-11.42}& 5.01        & {\color{lightgray}-26.82}& 5.41        & {\color{lightgray}-57.72}& 7.13         & {\color{lightgray}-119.42}& 12.75       \\
MeanFlow                    & {\color{lightgray}-9.00}&      \underline{4.34}       & {\color{lightgray}-21.26}&     \underline{5.14}        & {\color{lightgray}-46.63}&     \underline{2.84}         & {\color{lightgray}-97.59}& \textbf{2.80}        \\ \midrule
Shortcut-\modelname{} (Ours)   & \underline{3.07}          & 8.78        & \textbf{3.26}          & 10.21       & \textbf{2.73}         & 15.58        & 0.20         & 27.35       \\
Shortcut-Distill-\modelname{} (Ours) & \boxed{\textbf{3.12}}          & 5.68        & \underline{2.87}          & 5.96        & \underline{2.38}         & 7.35         & \underline{1.62}         & 13.76       \\
MeanFlow-\modelname{} (Ours)  & 2.38&     \textbf{3.78}        & 1.34&     \textbf{4.37}        & 1.63&      \boxed{\textbf{2.59}}        & \textbf{3.51}& \underline{3.02}        \\ \bottomrule
\end{tabular}
}
\end{table}
\begin{figure}[t]
    \centering
    \includegraphics[width=\linewidth]{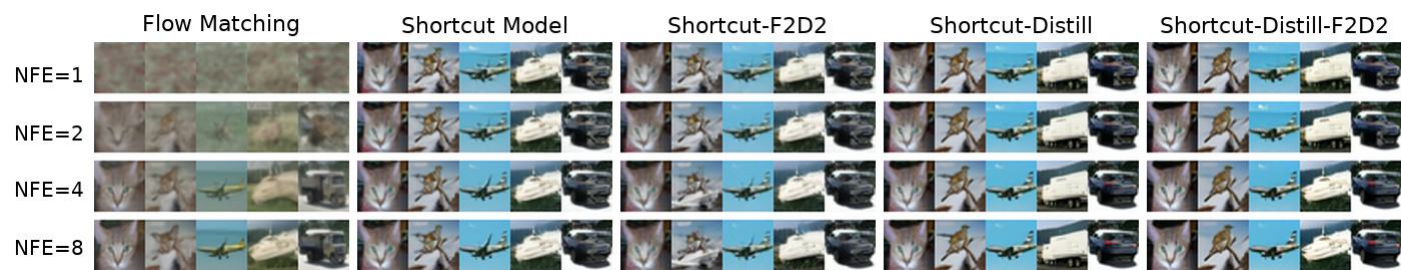}
    \caption{CIFAR-10 generated samples from different models with different numbers of steps.}
    \label{fig:cifar}
\end{figure}
Table~\ref{tab:cifa10} and Figure~\ref{fig:cifar} show the quantitative and qualitative comparison on CIFAR-10 respectively. As we can observe, flow matching yields poor FID and invalid NLL estimates in few-step setting. Shortcut model and MeanFlow achieve significantly better FID and are able to compute NLL for their ability to recover instantaneous velocity, but their NLL values remain invalid. Incorporating our proposed \modelname{} brings the NLL estimations to a calibrated range close to the teacher’s BPD across different settings. In particular, both \modelnamesc{} and Shortcut-Distill-\modelname{} substantially improves NLL compared to plain their orignal counterparts while maintaining competitive FID, indicating that \modelname{} can provide reasonable likelihood estimates without sacrificing much sample quality. Finally, MeanFlow-\modelname{} shows FID improvements relative to the original MeanFlow while simultaneously producing calibrated NLL, demonstrating that F2D2's potential in providing complementary training signals that are beneficial to both components.

\iclrpar{ImageNet $64\times 64$} Shown in Table~\ref{tab:imagenet}, flow matching quickly degenerates under few-step sampling, with invalid NLL and extremely poor FID. Shortcut-Distill improves the few-step FID but still produces invalid NLL. By contrast, Shortcut-Distill-\modelname{} achieves both competitive FID and meaningful likelihoods close to the teacher's BPD of 3.34 across all step counts. These results further confirm \modelname{}'s ability for simultaneous fast sampling and fast likelihood evaluation.

\iclrpar{CelebA-64} As shown in Table~\ref{tab:celeba64}, the flow matching model degenerates under few-step Euler sampling, yielding invalid likelihood estimates and worsening sample quality. LSD substantially improves the few-step FID, but still produce invalid NLL across all step counts. In contrast, LSD-F2D2 achieves both superior image quality and well-calibrated likelihoods in a few steps.

\begin{table}[t]
\caption{Negative log-likelihood (NLL) measured in BPD and FID results on ImageNet 64$\times$64 dataset with different numbers of Euler steps. The flow matching model here, which achieves BPD 3.34 as the NLL with 1024 steps and FID 13.09 with 200 steps, is also the teacher model we use in our Shortcut-Distill. For NLL, the closer to the teacher result (3.34 BPD) the better, and for FID, the lower the better. We denote the best results in \textbf{bold} and invalid predictions in {\color{lightgray} gray color}.}
\label{tab:imagenet}
\resizebox{\textwidth}{!}{%
\begin{tabular}{@{}ccccccccc@{}}
\toprule
\multirow{2}{*}{Method}      & \multicolumn{2}{c}{8 Steps} & \multicolumn{2}{c}{4 Steps} & \multicolumn{2}{c}{2 Steps} & \multicolumn{2}{c}{1 Step} \\ \cmidrule(l){2-9} 
                             & NLL          & FID          & NLL           & FID         & NLL          & FID          & NLL          & FID         \\ \midrule
Flow Matching                & {\color{lightgray}-6.41}& 31.60& {\color{lightgray}-15.87}& 68.55& {\color{lightgray}-35.23}& 170.00& {\color{lightgray}-74.54}& 363.39\\
Shortcut-Distill (Ours)      & {\color{lightgray}-9.03}& \textbf{19.47}& {\color{lightgray}-22.30}& \textbf{21.73}& {\color{lightgray}-49.01}& \textbf{28.12}& {\color{lightgray}-102.07}& \textbf{42.72}\\
Shortcut-Distill-F2D2 (Ours) & \textbf{3.51}& \underline{21.91} & \textbf{3.94}& \underline{24.05} & \textbf{3.97}& \underline{29.83} & \textbf{1.54}& \underline{44.02} \\ \bottomrule
\end{tabular}
}
\end{table}

\subsection{Maximum Likelihood Self-guidance with \modelnamemf{}}
Figure~\ref{fig:comparison_sampling} shows the FID and qualitative comparison among different methods built upon MeanFlow using the number of forward NFE on CIFAR-10. As we can observe, our \modelname{} improves the model's inference time scaling ability. With additional self-guidance, the model not only surpasses the baseline MeanFlow performance but also outperforms a 1024-step flow matching model of the same size, demonstrating the effectiveness its own likelihood predictions as valid signals to guide the sampling process toward higher-quality generations.

\begin{figure}[t]
    \centering
    \begin{subfigure}{0.4\linewidth}
        \centering
        \includegraphics[width=\linewidth]{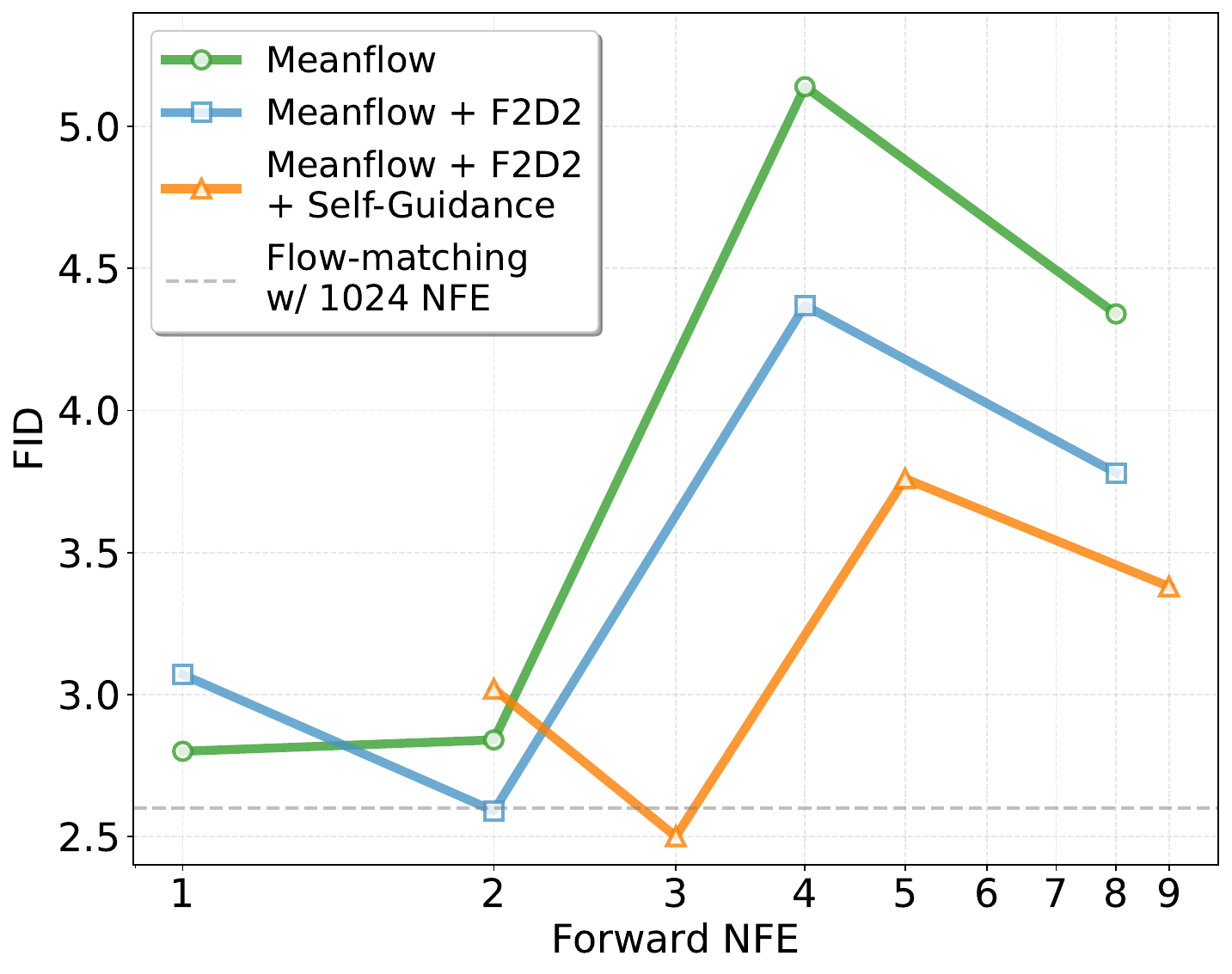}
        \caption{FID comparison across different numbers of forward NFEs.}
        \label{fig:maximum_likelihood_sampling}
    \end{subfigure}
    \hfill
    \begin{subfigure}{0.59\linewidth}
        \centering
        \includegraphics[width=\linewidth]{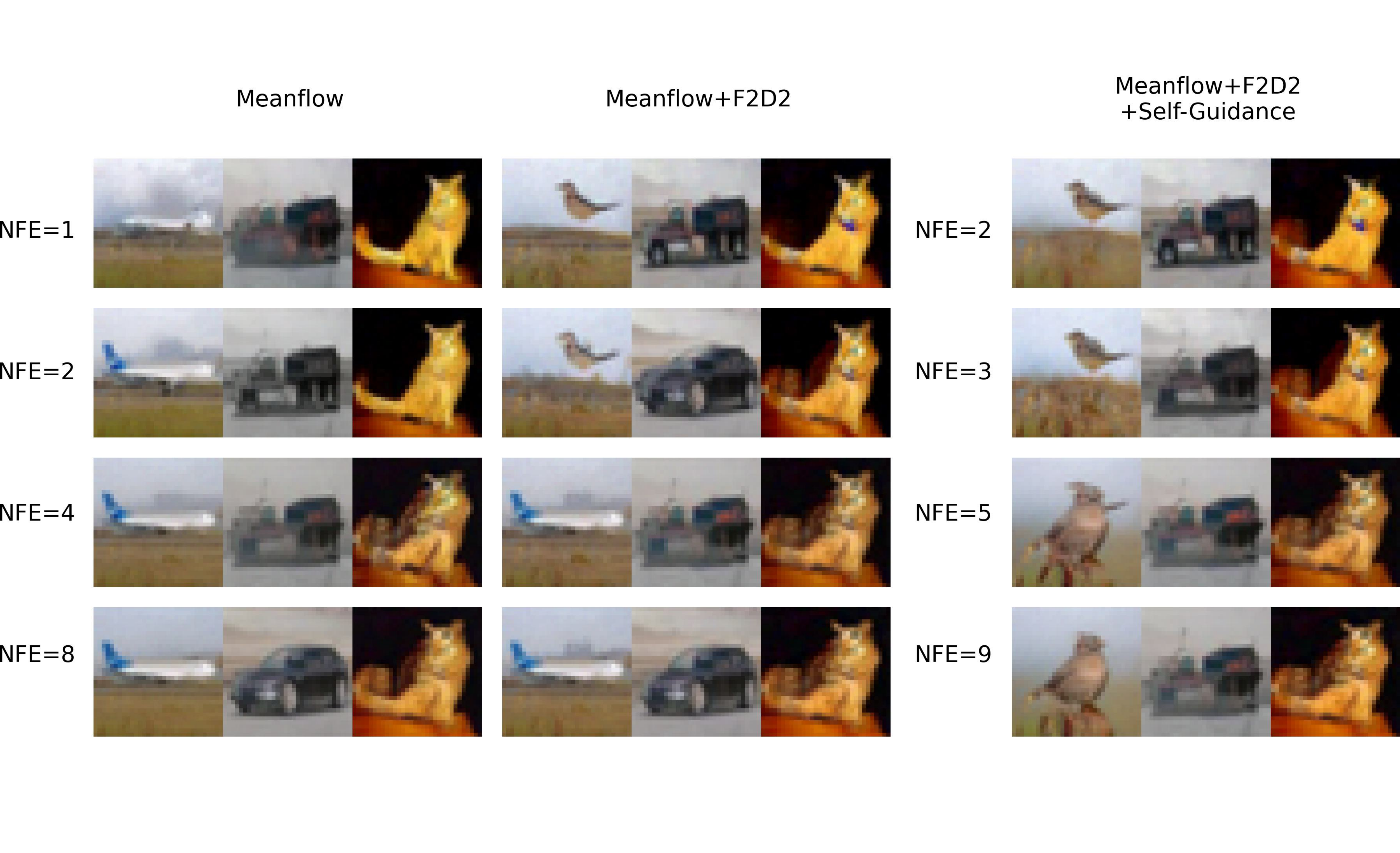}
        \caption{Example samples from various MeanFlow-based models using different numbers of forward NFEs.}
        \label{fig:meanflow_demo}
    \end{subfigure}
    \caption{Results of MeanFlow-based methods on CIFAR-10.}
    \label{fig:comparison_sampling}
\end{figure}

\begin{figure}[t]
\centering

\begin{minipage}[t]{0.48\textwidth}
  \vspace{0pt}
  \centering
  \includegraphics[width=\linewidth]{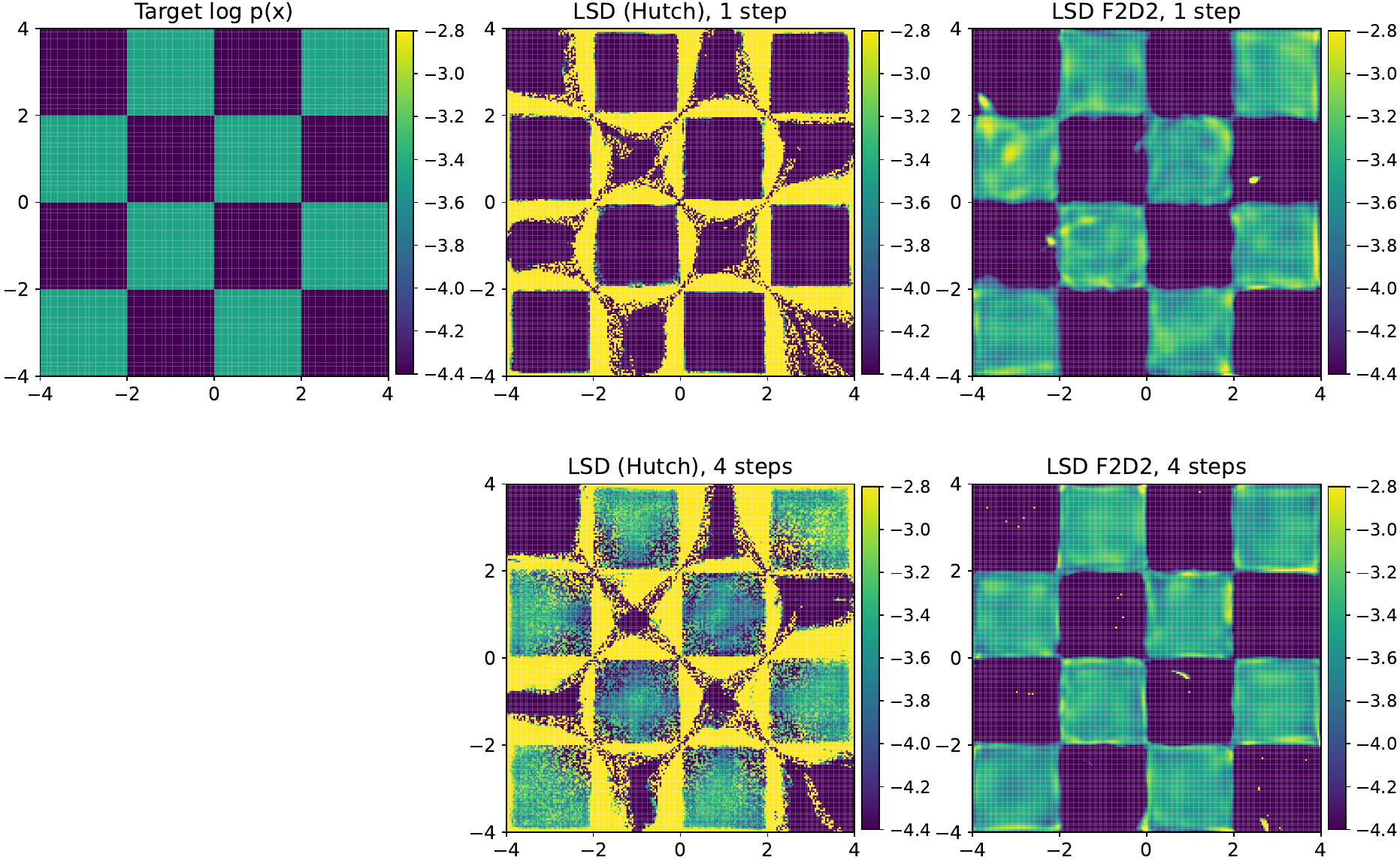}
  \captionof{figure}{LSD-based log-likelihood evaluation comparison on 2D checkerboard dataset.}
  \label{fig:backward-2d}
\end{minipage}\hfill
\begin{minipage}[t]{0.50\textwidth}
  \vspace{0pt}
  \centering
  \captionof{table}{Negative log-likelihood (NLL) measured in BPD and FID results on CelebA-64 dataset with different numbers of Euler steps. The flow matching model here achieves BPD 1.75 in 1024 steps and FID 2.48 in 200 steps. For NLL, closer to the flow matching estimate is better; for FID, lower is better. Best in \textbf{bold}; invalid in {\color{lightgray}gray}.}
  \label{tab:celeba64}

  \resizebox{\linewidth}{!}{%
  \begin{tabular}{@{}ccccccccc@{}}
  \toprule
  \multirow{2}{*}{Method}      & \multicolumn{2}{c}{8 Steps} & \multicolumn{2}{c}{4 Steps} & \multicolumn{2}{c}{2 Steps} & \multicolumn{2}{c}{1 Step} \\ \cmidrule(l){2-9}
                               & NLL          & FID          & NLL           & FID         & NLL          & FID          & NLL          & FID         \\ \midrule
  Flow Matching                & {\color{lightgray}-6.88}& 30.60& {\color{lightgray}-16.39}& 58.14& {\color{lightgray}-36.46}& 120.65& {\color{lightgray}-77.51}& 181.23\\
  LSD                          & {\color{lightgray}-6.78}& \underline{3.33}& {\color{lightgray}-14.89}& \underline{4.04}& {\color{lightgray}-32.72}& \underline{6.32}& {\color{lightgray}-69.83}& \underline{12.96}\\
  LSD-F2D2 (Ours)              & \textbf{1.64}& \textbf{2.41}& \textbf{1.75}& \textbf{2.75}& \textbf{1.73}& \textbf{3.86}& \textbf{1.64}& \textbf{6.94}\\
  \bottomrule
  \end{tabular}
  }
\end{minipage}

\end{figure}

\subsection{2D Checkerboard}

In this section, we present a set of comparison of log-likelihood evaluation results on 2D checkerboard, a synthetic dataset with analytically tractable ground truth log-likelihood. As we can observe in Figure~\ref{fig:backward-2d}, without F2D2, vanilla LSD catastrophically fails at few-step log-likelihood estimation. On the other hand, our LSD-F2D2 is able to accurately recover the target density distribution even with only 1 NFE, preserving both spatial structure and density values. This directly validates that,  while traditional methods fail at few-step likelihood evaluation, F2D2 enables accurate likelihood via joint flow map distillation.

\section{Conclusion}
We present \modelnamelong{} (\modelname{}), a simple and modular framework that enables both fast sampling and fast likelihood evaluation in flow-based generative models. By jointly distilling the sampling trajectory and divergence computation into a unified flow map, our method simultaneously achieves accurate likelihood evaluation and high sample quality with just a few NFEs. Our experiments on CIFAR-10 and ImageNet $64\times64$ demonstrate that \modelname{} maintains accurate likelihood estimates while preserving sample quality when applied to existing few-step methods including shortcut models and MeanFlow.
The efficiency gains from \modelname{} enable new algorithmic possibilities, as illustrated by our maximum likelihood self-guidance method, which enables a 2-step MeanFlow model to outperform a 1024-step flow matching model of the same size on CIFAR-10. As flow-based models continue to scale, we believe that efficient likelihood evaluation alongside fast sampling will become increasingly important for enabling new training objectives, model analysis techniques, and downstream applications that require both capabilities.

\section*{Reproducibility Statement}
We provide proofs to our theoretical results in Appendix~\ref{app:characterization} and~\ref{app:additional_proof}. We also provide the implementation details to reproduce our algorithm and experimental results in Section~\ref{sec:experiment} and Appendi~\ref{app:implemtation}. The links to the PyTorch and JAX implementations of our algorithm can be found on our website: \url{https://kellyyutonghe.github.io/f2d2/}.

\section*{Acknowledgment}
MS would like to recognize a TRI University 2.0 Research Partnership and a Google Robotics Research Award. We also thank Michael S. Albergo and Eric Vanden-Eijnden for the helpful discussions.

\bibliography{refs}
\bibliographystyle{iclr2026_conference}

\clearpage
\appendix
\section{Characterization of the joint flow map}
\label{app:characterization}
In this section, we provide an analytical characterization of the joint flow map for the combined sampling and likelihood dynamics~\cref{eq:likelihood_ode_system}.
For the simplicity of notations, we define the joint system
\begin{equation}
\label{eqn:app:joint_system}
\begin{aligned}
    \ddt x_t &= v(x_t,t), &\qquad x(0) &= x_0 \sim \Normal(0, I)\\
    \ddt z_t &= -\diverg(v(x_t,t)), &\qquad z_0 &= \log p_0(x_0)
\end{aligned} 
\end{equation}
where we will use the shorthand $z_t$ for $\log p_t(x_t)$.
Moreover, we define the right-hand side of~\cref{eqn:app:joint_system} as $g(y,t)^\top = (v(x,t), -\diverg(v(x_t,t)))^\top$ where $y = (x, z)^\top$.

We then define the flow maps for the two subsystems in~\cref{eqn:app:joint_system} as
\begin{equation}
\label{eqn:app:two_flow_maps}
\begin{aligned}
   \Phi_X(x,t,s) &= x + (s-t)u(x,t,s),\\ 
   \Phi_Z(x,z,t,s) &= z + (s-t)D(x,t,s)
\end{aligned}
\end{equation}
In~\cref{eqn:app:two_flow_maps}, we note that the hierarchical structure in~\cref{eqn:app:joint_system} is explicit, and that the function $D$ only depends on $x$ and not on $z$.
We then define the joint flow map as
\begin{equation}
    \label{eqn:app:joint_flow_map}
    \begin{aligned}
     \Phi_Y(y,t,s) &= \begin{bmatrix}
       \Phi_X(x,t,s)\\ \Phi_Z(x,z,t,s) 
    \end{bmatrix} = y + (s-t)f(x,t,s),\\
    f(x,t,s) &= \begin{bmatrix}
        u(x,t,s)\\ D(x,t,s)
    \end{bmatrix}
    \end{aligned}
\end{equation}
We first recall the simple tangent identity denoted the tangent condition by \cite{boffi2025buildconsistencymodellearning},  also leveraged by \citep{kim2023consistency,frans2024one,geng2025mean},  which allows us to recover the instantaneous velocity and divergence in the $s\to t$ limit:
\begin{lemma}[Tangent Condition]
    \label{lem:tangent}
   The flow map $\Phi_Y(y,t,s)$ for the joint system~\cref{eqn:app:joint_system} satisfies $\lim_{t \to s}\partial_s \Phi_Y(y,t,s) = g(y,s) = f(x,s,s)$.
    In particular, $u(x,s,s) = v(x,t)$ and $D(x,s,s) = -\diverg(v(x,s))$.
\end{lemma}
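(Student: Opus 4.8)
The plan is to evaluate $\lim_{t\to s}\partial_s\Phi_Y(y,t,s)$ in two different ways --- once from the ODE that defines the flow map, and once from the linear parametrization \cref{eqn:app:joint_flow_map} --- and then equate the two expressions. Throughout I assume the standing regularity hypothesis that $v(\cdot,\cdot)$ (and hence the joint right-hand side $g$) is continuously differentiable with bounded derivatives on the relevant domain, which guarantees existence, uniqueness, and $C^1$ dependence of solutions on initial time/state, and hence that $\Phi_Y$ and $f$ are themselves $C^1$ in all their arguments near the diagonal $t=s$.

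First, by the defining property of the flow map (\Cref{eq:flow_map} applied to the joint system \cref{eqn:app:joint_system}), $\Phi_Y$ satisfies the integral identity $\Phi_Y(y,t,s) = y + \int_t^s g(\Phi_Y(y,t,\tau),\tau)\,\rmd\tau$, where $g(y,\tau)^\top = (v(x,\tau),\,-\diverg(v(x,\tau)))^\top$ depends on $y$ only through its $x$-component. Differentiating in the upper limit gives $\partial_s\Phi_Y(y,t,s) = g(\Phi_Y(y,t,s),s)$. Since the flow map over a degenerate interval is the identity, $\Phi_Y(y,s,s)=y$, so sending $t\to s$ and using continuity of $g$ together with joint continuity of $(\tau,s)\mapsto\Phi_Y(y,\tau,s)$ yields $\lim_{t\to s}\partial_s\Phi_Y(y,t,s) = g(y,s)$.

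Second, from the linear parametrization $\Phi_Y(y,t,s) = y + (s-t)f(x,t,s)$ in \cref{eqn:app:joint_flow_map}, differentiating in $s$ gives $\partial_s\Phi_Y(y,t,s) = f(x,t,s) + (s-t)\,\partial_s f(x,t,s)$. Because $f$ is $C^1$ near the diagonal, the remainder $(s-t)\,\partial_s f(x,t,s)$ is $O(|s-t|)$ and vanishes as $t\to s$, leaving $\lim_{t\to s}\partial_s\Phi_Y(y,t,s) = f(x,s,s)$. Equating the two limits gives $f(x,s,s) = g(y,s)$; reading off the components $f=(u,D)^\top$ and $g=(v,-\diverg v)^\top$ then yields $u(x,s,s)=v(x,s)$ and $D(x,s,s)=-\diverg(v(x,s))$, which is the claim.

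There is no substantive obstacle here --- the content is a one-line consequence of the fundamental theorem of calculus applied to the flow-map ODE, combined with the linear ansatz. The only care required is bookkeeping around regularity: justifying differentiation of the integral identity in $s$, interchanging the limit $t\to s$ with evaluation of the continuous field $g$, and discarding the $(s-t)\,\partial_s f$ remainder term. All three are immediate under the standing smoothness assumption on $v$, so no new estimate is needed; the lemma is essentially a definitional sanity check that our linear parametrization is consistent with the instantaneous dynamics of the joint system.
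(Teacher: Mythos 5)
Your argument is correct and is exactly the content of the result the paper invokes here (the paper's own ``proof'' is a one-line citation to Lemma~2.1 of \citet{boffi2025buildconsistencymodellearning}, which establishes precisely this two-sided computation of $\lim_{t\to s}\partial_s\Phi$ via the flow-map ODE on one hand and the linear ansatz on the other, applied to the joint system). Note also that your derivation correctly yields $u(x,s,s)=v(x,s)$, which fixes the evident typo $v(x,t)$ in the lemma statement.
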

\begin{proof}
The proof follows by application of Lemma 2.1 from~\citet{boffi2025buildconsistencymodellearning}.
\end{proof}

Now, given~\cref{eqn:app:joint_flow_map}, we may now state the following proposition, which is based on an identity similar to~\cref{lem:tangent} in reverse.
\begin{proposition}[Characterization of the Joint Flow Map]
\label{prop:flow_char}
   Let $\Phi_Y(y,t,s) = y + (s-t)f(x,t,s)$ satisfy $f(x,s,s) = g(y,s)$ where $g(y,t)^\top = (v(x,t), -\diverg(v(x_t,t)))^\top$ denotes the dynamics for the joint sampling and likelihood system~\cref{eqn:app:joint_system}.
   Then, $\Phi_Y(y,t,s)$ is the flow map for the joint system if and only if any of the following conditions are satisfied:
   \begin{enumerate}
       \item (Lagrangian condition) $Z_{s, t}$ satisfies the Lagrangian equation
       \begin{equation}
       \label{eqn:app:lagrangian_joint}
          \partial_s \Phi_Y(y,t,s) = f(\Phi_Y(y,t,s),s,s) \qquad \forall \:\: (y, t, s) \in \R^{d+1} \times [0, 1]^2.
       \end{equation}
       \item (Eulerian condition) $\Phi_Y(y,t,s)$ satisfies the Eulerian equation
       \begin{equation}
       \label{eqn:app:eulerian_joint}
        \partial_t \Phi_Y(y,t,s) + \nabla_y \Phi_Y(y,t,s)f(y,t,t) = 0, \qquad \forall \:\: (y, t, s) \in \R^{d+1} \times [0, 1]^2.
       \end{equation}
       \item (Semigroup condition) $Z_{s, t}$ satisfies the semigroup property
       \begin{equation}
       \label{eqn:app:semigroup_joint}
          \Phi_Y(y,t,s) = \Phi_Y(\Phi_Y(y,t,r),r,s), \qquad \forall\:\: (y, t, r, s) \in \R^{d+1} \times [0, 1]^3, t<r<s. 
       \end{equation}
   \end{enumerate}
\end{proposition}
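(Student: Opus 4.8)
The plan is to reduce the claim to the flow-map characterization of \Cref{prop:flow-map-condition}, applied to the \emph{joint} system \Cref{eqn:app:joint_system} viewed as a single ODE $\ddt y_t = g(y_t,t)$ on $\R^{d+1}$, while exploiting the triangular structure of $g$. First I would record two structural observations. (i) Since $g(y,t) = (v(x,t),-\diverg(v(x,t)))^\top$ depends on $y=(x,z)^\top$ only through $x$, the $z$-coordinate is a pure quadrature slaved to the $x$-trajectory: integrating the $x$-subsystem from $x_t$ yields a trajectory $x_\tau$ independent of $z_t$, and then $z_s = z_t - \int_t^s \diverg(v(x_\tau,\tau))\,\rmd\tau$. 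Hence the \emph{true} joint flow map already has the parametrized form $\Phi_Y(y,t,s) = y + (s-t)f(x,t,s)$ with $f = (u,D)^\top$ a function of $x$ alone, and so does any candidate we consider; this is what licenses comparing our restricted parametrization against the general characterization. (ii) The ``diagonal identity'' imposed by \Cref{prop:flow-map-condition} — that the linear coefficient agrees with the instantaneous velocity on $\{t=s\}$ — is precisely the standing hypothesis $f(x,s,s)=g(y,s)$, equivalently (by \Cref{lem:tangent}) $u(x,s,s)=v(x,s)$ and $D(x,s,s)=-\diverg(v(x,s))$.

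Given these observations, I would instantiate \Cref{prop:flow-map-condition} with state $y$, velocity field $g$, and average-velocity parametrization $f$, and transcribe the three equivalent conditions. The Lagrangian condition becomes $\partial_s \Phi_Y(y,t,s) = f(\Phi_Y(y,t,s),s,s)$, i.e. \Cref{eqn:app:lagrangian_joint}, under the convention that $f$ reads off the $x$-slot of a vector argument. The Eulerian condition becomes $\partial_t \Phi_Y(y,t,s) + \nabla_y\Phi_Y(y,t,s)\,f(y,t,t) = 0$, i.e. \Cref{eqn:app:eulerian_joint}; here I would note that because $f(\cdot,t,t)=g$ ignores the $z$-slot, this PDE block-decomposes into the $X$-Eulerian equation and a linear driven equation for the $Z$-block, so nothing beyond the scalar $X$-system is needed. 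The semigroup identity transcribes verbatim to \Cref{eqn:app:semigroup_joint}. For a self-contained write-up I would also include the short standard arguments: for ``$\Rightarrow$'', uniqueness of ODE solutions gives the semigroup and Lagrangian identities for the true flow map, and differentiating the semigroup identity in the intermediate time $r$ at $r=t$ yields the Eulerian identity; for ``$\Leftarrow$'', Lagrangian $+$ tangent shows $s\mapsto\Phi_Y(y,t,s)$ solves the ODE with value $y$ at time $t$ and hence equals the true flow map by uniqueness, semigroup $+$ tangent reduces to the Eulerian case by the same $r\to t$ differentiation, and Eulerian $+$ tangent follows because $t'\mapsto \Phi_Y(\psi(t'),t',s)$ is constant along any true trajectory $\psi$ (its $t'$-derivative is the left side of \Cref{eqn:app:eulerian_joint}), hence equals $\Phi_Y(\psi(s),s,s)=\psi(s)$.

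The only nontrivial point — and where I would be most careful — is observation (i): making precise that restricting $f$ to depend on $x$ alone is without loss of generality, so that the ``only if'' direction of \Cref{prop:flow-map-condition} (which a priori permits the average velocity to depend on the full state) still returns a flow map of our triangular form, and so that the transcribed conditions are genuinely equivalent to those of that proposition rather than merely implied by them. Everything else — verifying the block-decomposition of the Eulerian PDE, and checking the regularity of $g$ (Lipschitzness, inherited from that of $v$ and $\nabla v$) required for \Cref{prop:flow-map-condition} and for the uniqueness arguments — is routine.
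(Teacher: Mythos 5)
Your proposal is correct and follows essentially the same route as the paper, which simply invokes the general flow-map characterization (Proposition~2.2 of \citet{boffi2025buildconsistencymodellearning}, restated as \Cref{prop:flow-map-condition}) applied to the joint system on $\R^{d+1}$. You additionally spell out the triangular structure and the fact that $f$ may be taken to depend on $x$ alone, which the paper leaves implicit but which is indeed the one point worth making explicit.
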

\begin{proof}
   The proof follows by application of Proposition 2.2 from~\citet{boffi2025buildconsistencymodellearning} applied to the joint system.
\end{proof}
\cref{prop:flow_char} gives three characterizations of the joint flow map that may be used to devise few-step flow-based training algorithms.
In each case, by definition of~\cref{eqn:app:joint_flow_map}, the $X$ block reduces to the flow map characterizations introduced in~\citet{boffi2025flowmapmatchingstochastic,boffi2025buildconsistencymodellearning} for the sampling system.
The second block for the likelihood dynamics is new, which we focus on now to instantiate the resulting equations.

\iclrpar{Lagrangian likelihood equation.}
By inspection,~\cref{eqn:app:lagrangian_joint} leads to the equation
\begin{equation}
    D(x,t,s) = -\diverg(v(\Phi_X(x,t,s),s)) - (s-t)\partial_s D(\Phi_X(x,t,s),t,s).  
\end{equation}
Squaring the residual leads to the objective function
\begin{equation}
    \mathcal{L}(\hat{D}) = \E\left[\Vert\hat{D}(x_t,t,s) -\left(-\diverg(v(\Phi_X(x_t,t,s),s)) - (s-t)\partial_s D(\Phi_X(x_t,t,s),t,s)\right)\Vert^2\right]
\end{equation}
At training time, because we don't have access to the true $-\diverg(v)$ or the true sampling flow map $\Phi_X(x,t,s)$, we may replace them by their self-consistent estimates,
\begin{equation}  
    \mathcal{L}(\theta) = \E\left[\Vert D_\theta(x_t,t,s) - \sg{-\diverg(u_\theta(\Phi_{X;\theta}(x_t,t,s),s,s)) - (s-t)\partial_s D_\theta(\Phi_{X;\theta}(x_t,t,s),t,s)}\Vert^2\right]
\end{equation}
where we have also placed a stopgrad operator to avoid backpropagation through Jacobian-vector products and to control the flow of information from the teacher to the student.
This gives the Lagrangian likelihood self-distillation algorithm.

Notice that in the limit as $t\to s$, the Lagrangian condition can cover the tangent condition.

\iclrpar{Eulerian likelihood equation.}
To derive our Eulerian schemes, we first note that
\begin{equation}
    \nabla_y \Phi_Y(y,t,s) = \begin{bmatrix}
        \nabla_x \Phi_X(x,s,t) & 0 \\ \nabla_x \Phi_Z(x,z,t,s) & \nabla_z \Phi_Z(x,z,t,s)
    \end{bmatrix}.
\end{equation}
Hence to compute the second component of the Eulerian equation, we must collect some simple algebraic identities,
\begin{equation}
\begin{aligned}
   \partial_t \Phi_Z(x,z,t,s) &= -D(x,t,s) + (s-t)\partial_t D(x,t,s),\\
    \nabla_x \Phi_Z(x,z,t,s) &= (s-t)\nabla_x D(x,t,s),\\
    \nabla_z \Phi_Z(x,z,t,s) &= I.
\end{aligned} 
\end{equation}
Using the above, we find that the Eulerian relation for $Y$ becomes 
\begin{equation}
   -D(x,t,s) + (s-t) \partial_t D(x,t,s) + (s-t)\nabla_x D(x,t,s)v(x,t) -\diverg(v(x,t)) = 0.
\end{equation}
We may enforce this equation by minimizing the square residual, 
\begin{align}
    \calL(\hat{D}) = \E\left[\Vert D(x_t,t,s) - \left((s-t) \partial_t D(x_t,t,s) + (s-t)\nabla_x D(x_t,t,s)v(x,t) -\diverg(v(x_t,t))\right)\Vert^2\right].
\end{align}
At training time, we again place a $\sg{\cdot}$ operator to avoid backpropagating through the derivatives,
\begin{equation}
\small
    \calL(\theta) = \E\left[\Vert D_\theta(x_t,t,s) - \sg{(s-t) \partial_t D_\theta(x_t,t,s) + (s-t)\nabla_x D_\theta(x_t,t,s)v(x,t) -\diverg(v(x_t,t))}\Vert^2\right].
\end{equation}
In the above, we do not have access to the ideal $v(x_t,t)$ nor $-\diverg(v(x_t,t)$.
However, we observe that because of the placement of $\sg{\cdot}$, resulting gradient will be linear in $v(x_t,t)$, so that we may replace it by its Monte Carlo estimate. In practice, this reduces to conditional-OT flow matching.
Second, we replace $-\diverg(v(x_t,t)$ by the self-consistent estimate $-\diverg(u_\theta(x_t,t,t)$, leading to
\begin{equation}
\small
     \calL(\theta) = \E\left[\Vert D_\theta(x_t,t,s) - \sg{(s-t) \partial_t D_\theta(x_t,t,s) + (s-t)\nabla_x D_\theta(x_t,t,s)(x_1-x_0) -\diverg(u_\theta(x_t,t,t))}\Vert^2\right].
\end{equation}
This gives the \modelnamemf{} algorithm.

\iclrpar{Semigroup property.}
Last, we consider the semigroup approach.
The second block is given by,
\begin{equation}
    Y_{s, t}(z) = Y_{u, t}(Y_{s, u}(z)).
\end{equation}
\begin{equation}
    \Phi_Y(y,t,s) = \Phi_Y(\Phi_Y(y,t,r),r,s)
\end{equation}
Writing this out using~\cref{eqn:app:joint_flow_map}, we find that
\begin{equation}
\label{eqn:semigroup_step}
\begin{aligned}
   z + (s-t)D(t,s) &= \Phi_Z(x,z,t,r) + (s-r)D(\Phi_Z(x,z,t,r),r,s),\\
   \iff z + (s-t)D(x,t,s) &= z + (r - t)D(x,t,r) + (s-r)D(\Phi_X(x,t,r),r,s),\\
\end{aligned}
\end{equation}
Setting $r = \frac{1}{2}(t+s)$ recovers a continuous limit of shortcut models, as shown in~\citet{boffi2025buildconsistencymodellearning}.
In this case,~\cref{eqn:semigroup_step} becomes
\begin{equation}
   D(x,t,s) = \frac{1}{2}\left(D(x,t,r) + D(\Phi_X(x,t,r),r,s)\right),
\end{equation}
Squaring the residual gives
\begin{equation}
   \mathcal{L}(\hat{D}) = \E\left[\Vert \hat{D}(x_t,t,s) - \frac{1}{2}\left(D(x_t,t,r) + D(\Phi_X(x_t,t,r),r,s)\right)\Vert^2\right],
\end{equation}
where as in the above Eulerian and Lagrangian approaches, we have replaced the ideal flow map $X$ by the self-consistent estimate.
Again, to control the flow of information we may place a stopgrad,
\begin{equation}
   \mathcal{L}(\theta) = \E\left[\Vert D_\theta(x_t,t,s) - \frac{1}{2}\sg{D_\theta(x_t,t,r) + D_\theta(\Phi_{X;\theta}(x_t,t,r),r,s)}\Vert^2\right],
\end{equation}
which gives our \modelnamesc{} algorithm.

\section{Additional Theoretical Analysis}
\label{app:additional_proof}
\begin{corollary}
    \label{corollary:shortcut}
    Shortcut models enforce semigroup property with their self-consistency loss.
\end{corollary}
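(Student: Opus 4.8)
The plan is to identify the shortcut self-consistency loss $\Lsc$ with the squared residual of the semigroup condition of \Cref{prop:flow-map-condition}(c), specialized to the midpoint split $r=\tfrac12(t+s)$, and then to promote that midpoint identity to the full semigroup property. First I would insert the linear parametrization $\Phi_{X;\theta}(x,t,s)=x+(s-t)u_\theta(x,t,s)$ and $r=\tfrac12(t+s)$ into the integrand of $\Lsc$. Multiplying the relation $u_\theta(x,t,s)=\tfrac12\big(u_\theta(x,t,r)+u_\theta(\Phi_{X;\theta}(x,t,r),r,s)\big)$ through by $s-t$ and using the midpoint identity $\tfrac{s-t}{2}=r-t=s-r$, it rearranges exactly to $\Phi_{X;\theta}(x,t,s)=\Phi_{X;\theta}\big(\Phi_{X;\theta}(x,t,r),r,s\big)$. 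Hence, assuming the training distribution over $(t,s,x_t)$ has full support, $\Lsc(\theta)=0$ is equivalent to $\Phi_{X;\theta}$ satisfying the semigroup identity at every midpoint (the stop-gradient in $\Lsc$ shapes only the optimization dynamics and is irrelevant to the set of zero-loss minimizers). Together with $\Lfmsd(\theta)=0$, which via~\eqref{eq:diagonal_identity} supplies the tangent condition $u_\theta(x,t,t)=v(x,t)$, we are in the hypothesis of \Cref{prop:flow-map-condition} modulo the gap that the semigroup property is known only for $r=\tfrac12(t+s)$.

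Second, I would close that gap. Applying the midpoint identity recursively to each half, then each quarter, and so on, yields for every $n$ the $2^n$-fold composition identity $\Phi_{X;\theta}(\cdot,t,s)=\Phi_{X;\theta}(\cdot,a_{2^n-1},a_{2^n})\circ\cdots\circ\Phi_{X;\theta}(\cdot,a_0,a_1)$ over the uniform grid $a_i=t+\tfrac{i}{2^n}(s-t)$. Passing to the limit $n\to\infty$, the right-hand side is an Euler-type scheme whose increment at node $a_i$ is $(a_{i+1}-a_i)\,u_\theta(\cdot,a_i,a_{i+1})$, and by the tangent condition $u_\theta(\cdot,a_i,a_{i+1})\to v(\cdot,a_i)$ as the mesh vanishes; under the standing well-posedness and regularity assumptions on $v$ (hence on $u_\theta$ near the diagonal) this scheme converges to the exact flow of $\dot x=v(x,\tau)$. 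Therefore $\Phi_{X;\theta}$ coincides with the exact flow map of that ODE, which satisfies the semigroup property for all $t<r<s$, so by \Cref{prop:flow-map-condition} $\Phi_{X;\theta}$ is a valid flow map --- the assertion of the corollary. An alternative that avoids the ODE limit is to extend the midpoint identity to all dyadic $r\in(t,s)$ by induction on the refinement level (inserting the midpoint and using the composition identity above to factor) and then to arbitrary $r$ by continuity of $(t,s)\mapsto\Phi_{X;\theta}(x,t,s)$; one may also simply invoke the analogous midpoint-to-full statement established in \citet{boffi2025buildconsistencymodellearning}.

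I expect the limiting step to be the main obstacle. Making precise that the $2^n$-fold midpoint composition converges to the solution operator of $\dot x=v$ requires a uniform regularity hypothesis --- for instance local Lipschitzness of $u_\theta$ in $x$, uniform for $s$ near $t$ --- together with care that the convergence is uniform on compact sets so the limiting identity holds pointwise; the continuity-based alternative similarly needs joint continuity of $\Phi_{X;\theta}$ in its time arguments and careful combinatorial bookkeeping when inserting non-dyadic breakpoints. By contrast, the algebraic reduction of $\Lsc$ to the midpoint semigroup residual is routine. In the writeup I would state the inherited regularity assumption explicitly and otherwise keep the argument at the level sketched here.
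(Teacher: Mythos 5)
Your first step --- rewriting $\Lsc$ as the squared residual of the semigroup identity $\Phi_{X;\theta}(x,t,s)=\Phi_{X;\theta}(\Phi_{X;\theta}(x,t,r),r,s)$ at the midpoint $r=\tfrac12(t+s)$, using $(s-t)/2=r-t=s-r$ --- is exactly the paper's argument: its one-line proof simply points back to the Appendix~\ref{app:characterization} derivation of the semigroup block for $\Phi_Y$ and observes that the same algebra applies to the $X$-component alone. Where you genuinely go beyond the paper is in the second half. The paper only establishes that the loss is the residual of the \emph{midpoint} instance of the semigroup condition and delegates the passage from midpoint consistency to the full flow-map property to the citation of \citet{boffi2025buildconsistencymodellearning}; you instead supply that passage yourself, via dyadic refinement to a $2^n$-fold composition identity and an Euler-scheme limit controlled by the tangent condition (or the continuity-based alternative). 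This buys a self-contained and logically stronger statement --- it shows the zero-loss set actually pins down the exact flow map, not merely a midpoint-consistent operator --- at the cost of the uniform regularity hypotheses you correctly flag as necessary for the limit. Your reading matches the intended content of the corollary, and the extra argument is sound at the level of rigor sketched; just be aware that the paper itself never carries it out and would accept the shorter version that stops at the midpoint identification plus the external reference.
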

\begin{proof}
    The same proof as the semigroup property for $\Phi_Y$ holds for $\Phi_X$.
\end{proof}

\begin{corollary}
\label{corollary:meanflow}
MeanFlow~\citep{geng2025mean} directly enforces the Eulerian condition with MeanFLow identity.
\end{corollary}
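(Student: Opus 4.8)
The plan is to show that, once we adopt the linear parametrization $\Phi(x,t,s) = x + (s-t)u(x,t,s)$ of \Cref{eq:linear_parametrization}, the ``MeanFlow identity'' enforced by \Cref{eq:MF} is nothing more than the Eulerian equation of \Cref{prop:flow-map-condition}(b) rewritten in terms of $u$. Corollary~\ref{corollary:meanflow} then follows from a one-line chain-rule computation together with an appeal to \Cref{prop:flow-map-condition}: the Eulerian equation together with the tangent condition is \emph{sufficient} for $\Phi$ to be a valid flow map, and the MeanFlow objective regresses $u_\theta$ onto exactly the target that makes that identity hold, so MeanFlow is precisely enforcing condition (b).

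First I would expand the two terms of the Eulerian equation $\partial_t \Phi(x,t,s) + \nabla_x\Phi(x,t,s)\,u(x,t,t) = 0$ using the parametrization: $\partial_t\Phi(x,t,s) = -u(x,t,s) + (s-t)\,\partial_t u(x,t,s)$ and $\nabla_x\Phi(x,t,s) = I + (s-t)\,\nabla_x u(x,t,s)$. Substituting these and invoking the tangent/diagonal identity $u(x,t,t) = v(x,t)$ from \Cref{eq:diagonal_identity}, the Eulerian equation rearranges to
\[
u(x,t,s) \;=\; v(x,t) + (s-t)\bigl(\partial_t u(x,t,s) + \nabla_x u(x,t,s)\,v(x,t)\bigr).
\]
Next I would observe that the bracketed expression is precisely the total (material) derivative of $u$ along the characteristic curve $\ddt x_\tau = v(x_\tau,\tau)$: by the chain rule, the derivative of $\tau \mapsto u(x_\tau,\tau,s)$ evaluated at $\tau = t$ equals $\partial_t u(x,t,s) + \nabla_x u(x,t,s)\,v(x,t)$, which is exactly the Jacobian--vector product that \Cref{eq:MF} instructs the model to compute. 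Hence the Eulerian equation is equivalent to $u(x_t,t,s) = v(x_t,t) + (s-t)\,\ddt u(x_t,t,s)$ --- the MeanFlow identity (up to the argument-ordering convention of \citet{geng2025mean}) --- and squaring its residual gives exactly the objective \Cref{eq:MF}.

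To close, I would note two points. Each step above is a reversible rearrangement, so requiring the MeanFlow identity for all $(x,t,s)$ is equivalent to $\Phi$ solving the Eulerian equation; moreover, setting $s=t$ annihilates the $(s-t)$ factor and recovers the tangent condition $u(x,t,t)=v(x,t)$, so the identity alone supplies \emph{both} hypotheses of \Cref{prop:flow-map-condition}(b), and the proposition then certifies $\Phi$ as a valid flow map (this is also why \Cref{eq:MF} simultaneously plays the role of $\mathcal{L}_{\textsc{vm}}$ and $\mathcal{L}_{\mathrm{u}}$). There is no deep obstacle here --- the result is essentially definitional --- so the only step I would be careful with is the bookkeeping: getting the signs and the $(s-t)$ coefficients in $\partial_t\Phi$ and $\nabla_x\Phi$ correct, and reconciling the paper's flow-map argument order $u(x,t,s)$ with MeanFlow's original $u(x,s,t)$ so that the time variable being differentiated in $\ddt u$ is the one transported by $v$.
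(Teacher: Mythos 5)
Your proposal is correct and follows essentially the same route as the paper's proof: expand $\partial_t\Phi$ and $\nabla_x\Phi$ under the linear parametrization, substitute $u(x,t,t)=v(x,t)$, and rearrange to recover the MeanFlow identity with the JVP term as the material derivative. The additional remarks on reversibility and on $s=t$ recovering the tangent condition are consistent with, though not spelled out in, the paper's argument.
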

\begin{proof}
Since $u(x_t,t,t)=v(x_t,t)$ is implied by the linear parametrization in~\Eqref{eq:linear_parametrization}, we can have
\begin{align*}
    \partial_t \Phi(x_t,t,s) + \nabla_{x_t} \Phi(x_t,t,s)u(x_t,t,t) &= 0 \\
    \partial_t (x_t + (s-t)u(x_t,t,s)) + \nabla_{x_t} (x_t + (s-t)u(x_t,t,s))v(x_t,t) &= 0 \\
    -u(x_t,t,s) + (s-t)\partial_t u(x_t,t,s) + (I + (s-t)\nabla_{x_t} u(x_t,t,s))v(x_t,t) &= 0 \\
    (s-t)\left(\partial_t u(x_t,t,s) + \nabla_{x_t} u(x_t,t,s)v(x_t,t)\right) + v(x_t,t) &= u(x_t,t,s) \\
    (s-t)\frac{d}{dt}u(x_t,t,s) + v(x_t,t) &= u(x_t,t,s) \\
\end{align*}
which is exactly MeanFlow identity.
\end{proof}
\clearpage

\section{Implementation Details}
\label{app:implemtation}
\begin{table}[t]
\centering
\renewcommand{\arraystretch}{1.2}
\begin{tabular}{lcccc}
\toprule
& \textbf{CIFAR-10} & \textbf{ImageNet-$64\times 64$} & \textbf{CelebA-64}\\
\midrule
Noise embedding       & Positional & Positional & Positional \\
Channels              & 128   & 192 & 128\\
Channels multiple     & 1,2,2,2 & 1,2,3,4 & 1,2,3,4\\
Attention resolution  & 16    & 32,16,8 & 16,8 \\
Residual blocks & 4 & 3 & 3\\
Dropout               & 0.13   & 0.1 & 0.0 \\
Batch size  & 512   & 1024 & 256\\
GPUs                  & 8 L40S     & 8 L40S  & 4 L40S\\
Iterations            & 150k  & 125k & 350k \\
Learning Rate         & 1e-4 (constant)  & 1e-4 (constant) & 1e-2 (Sqrt decay at 35k)\\
Warmup Steps          & 30k (linear warmup)   & 60k (linear warmup)  & -\\
Precision             & float32 & float32 & bfloat16\\
Optimizer             & RAdam  & RAdam & RAdam\\
EMA rate              & 0.9999 & 0.9999 & 0.9999\\
\bottomrule
\end{tabular}
\caption{Training hyperparameters for flow-matching.}
\label{tab:flow-matching-configs}
\end{table}

In this section, we provide the training details for each model experimented in this paper, and the pseudocode for each F2D2 instantiation.

\iclrpar{Flow-matching.} We use linear interpolation to generate training targets. Flow-matching models are trained on both unconditional CIFAR-10 and ImageNet-$64\times64$ using the configurations in Table~\ref{tab:flow-matching-configs}, with 200 standard Euler steps for sampling. This model serves as the teacher model for the second distillation stage.

\iclrpar{Shortcut Model.} We reimplement the shortcut model (~\cite{frans2024one}) for CIFAR-10. 
Following their method, we strictly sample $s \geq t$ (i.e. forward-only training) and use discrete timesteps and set $1/128$ as the smallest unit of time for approximating the ODE. We consider 8 possible shortcut lengths ranging from $(1, 1/2, \ldots, 1/128)$. Similarly, we divide the batch into $3/4$ for flow-matching training and $1/4$ for self-consistency training. However, instead of DiT, we use a U-Net backbone with configurations in Table~\ref{tab:flow-matching-configs}. 
We train for 100k iterations on CIFAR-10 and sample with 1, 2, 4, and 8 standard Euler steps. Additionally, we follow the original paper and parametrize the model to take $x_t$, $t$ and $s-t$ as inputs.

\iclrpar{Shortcut-Distill.} We use the velocity predicted by the flow-matching model as the target for the flow-matching loss, instead of $x_{\texttt{data}} - x_{\texttt{noise}}$. All other configurations remain the same as in the Shortcut Model.

\iclrpar{MeanFlow.} We directly use the pre-trained model weights and parametrization provided by the official PyTorch repository to conduct the CIFAR-10 MeanFlow experiments. The provided pre-trained MeanFlow model has the same size as all other models we implement for CIFAR-10. Because the pre-trained MeanFlow models is forward only (i.e. $t \leq s$), we follow the same convention and adapt our algorithm based on the considerations that we have elaborated in Section~\ref{sec:method}.

\iclrpar{LSD.} We follow the experimental setup of \cite{boffi2025buildconsistencymodellearning}. We allocate $75\%$ of
each batch to the flow matching loss and $25\%$ to the self-distillation loss for the first 200k iterations. And then we allocate $50\%$ of
each batch to the flow matching loss and $50\%$ to the self-distillation loss for another 150k iterations. To enable both forward and backward estimation, we sample two time steps uniformly at random for the self-distillation loss, without enforcing any ordering constraint between them (i.e., we do not require one to be larger than the other). All LSD models are trained with the full $(t,s) \sim \mathcal{U}([0,1]^2$.

\iclrpar{Log-likelihood estimation with vanilla flow map models.} Since Flow maps can recover instantaneous velocity through their tangent condition $u_\theta(x_t,t,t) \approx v(x_t,t)$, flow map models trained only for sampling can, in principle, evaluate likelihood by computing $div(u_\theta(x_t,t,t))$ and solving for Eq 2.3 and 2.4. As a result, we use Euler solver with this formuation to produce the NLL estimation of the baseline Shorcut, Shortcut-Distill and MeanFlow models. 

\iclrpar{F2D2 for Shortcut Model and Shortcut-Distill.} For F2D2 training, we add a scalar head after the UNet decoder. The scalar head is implemented as an MLP: the decoder’s final feature map is first flattened, then passed through two fully connected layers with SiLU activations, and finally projected to a single scalar. This head maps the spatial feature representation into a scalar value, which we use to predict divergence.

For CIFAR-10, the hidden sizes are 128 and 64, while for ImageNet-$64\times64$ they are 64 and 16. We linearly warm-start the model with teacher weights and train it using the flow-matching loss, with the teacher model taken from either the Shortcut Model or the Shortcut Model-Distill. We also adopt the same discrete timesteps as in the Shortcut Model. To balance the four losses, we scale down the divergence distillation targets by a factor of 20{,}000 for CIFAR-10 and 300{,}000 for ImageNet-$64\times64$. We train Shortcut-Distill-F2D2 for 10k iterations on CIFAR-10, 10k iterations on ImageNet-$64\times64$. We also train Shortcut-F2D2 for 10k iterations on CIFAR-10. We stop the training when the calibrated value of bpd is reached. All other configurations remain the same. Although our derivation matches $-\diverg(v)$, our implementation parametrizes to predict $\diverg(v)$, which yields an equivalent formulation as long as the training loss and the sampling process as the appropriate signs.

The pseudocode for Shortcut-F2D2 is provided in Algorithm~\ref{alg:sc-f2d2} and the one for Shortcut-Distill-F2D2 is provided in Algorithm~\ref{alg:sc-distill-f2d2}.

\begin{algorithm}[t]
\caption{\modelnamesc{} Training}
\label{alg:sc-f2d2}
\begin{algorithmic}[1]
\For{each training step}
\State $x_1 \sim \pdata$, $x_0 \sim p_0$, $(t,s) \sim \mathcal{U}([0,1]^2)$
\Comment{forward-only training uses $t \leq s$}
\State $r \gets (t+s)/2$
\State $x_t \gets (1-t)x_0 + tx_1$
\State $x_r \gets x_t + (r-t)u_\theta(x_t,t,r)$
\State $\Lfmsd(\theta) \gets \|u_\theta(x_t, t, t) - (x_1 - x_0)\|^2 $ 
\State $\Lsc(\theta) \gets \Vert u_\theta(x_t,t,s) - \frac{1}{2}\sg{u_\theta(x_t,t,r) + u_\theta(x_r,r,s)}\Vert^2$
\State $D_{t} \gets \sg{\diverg(u_{\theta}(x_t, t, t))}$
\State $\Ldivvm(\theta) \gets \|D_{\theta}(x_{t}, t, t) + D_t\|^2$
\State $\Ldivsc(\theta) \gets \Vert D_\theta(x_t,t,s) - \frac{1}{2}\sg{D_\theta(x_t,t,r) + D_\theta(x_r,r,s)}\Vert^2$
\State $\Lscj(\theta) \gets \Lfmsd(\theta) + \Lsc(\theta) + \Ldivvm(\theta) + \Ldivsc(\theta)$
\State Update $\theta$ w.r.t. $\Lscj(\theta)$
\EndFor
\State \Return $\theta$
\end{algorithmic}
\end{algorithm}

\begin{algorithm}[t]
\caption{Shortcut-Distill-F2D2 Training}
\label{alg:sc-distill-f2d2}
\begin{algorithmic}[1]
\For{each training step}
\State $x_1 \sim \pdata$, $x_0 \sim p_0$, $(t,s) \sim \mathcal{U}([0,1]^2)$
\Comment{forward-only training uses $t \leq s$}
\State $r \gets (t+s)/2$
\State $x_t \gets (1-t)x_0 + tx_1$
\State $x_r \gets x_t + (r-t)u_\theta(x_t,t,r)$
\State $\Lfmsd(\theta) \gets \|u_\theta(x_t, t, t) - v_\phi(x_t, t)\|^2 $ 
\Comment{match with flow matching teacher $v_\phi$}
\State $\Lsc(\theta) \gets \Vert u_\theta(x_t,t,s) - \frac{1}{2}\sg{u_\theta(x_t,t,r) + u_\theta(x_r,r,s)}\Vert^2$
\State $D_{t} \gets \sg{\operatorname{div}(v_\phi(x_t, t))}$
\Comment{matching teacher divergence}
\State $\Ldivvm(\theta) \gets \|D_{\theta}(x_{t}, t, t) + D_t\|^2$
\State $\Ldivsc(\theta) \gets \Vert D_\theta(x_t,t,s) - \frac{1}{2}\sg{D_\theta(x_t,t,r) + D_\theta(x_r,r,s)}\Vert^2$
\State $\Lscj(\theta) \gets \Lfmsd(\theta) + \Lsc(\theta) + \Ldivvm(\theta) + \Ldivsc(\theta)$
\State Update $\theta$ w.r.t. $\Lscj(\theta)$
\EndFor
\State \Return $\theta$
\end{algorithmic}
\end{algorithm}

\iclrpar{F2D2 for MeanFlow.} We use the same scalar head for CIFAR-10 with Shortcut-Distill-F2D2 and train for an additional 50 epochs using the same configurations as in the original PyTorch implementation. We use the pre-trained flow matching model to provide the instantaneous divergence supervision. Algorithm~\ref{alg:meanflow-f2d2} shows the pseudo-code for MeanFlow-F2D2 training.

\begin{algorithm}[t]
\caption{MeanFlow-F2D2 Training}
\label{alg:meanflow-f2d2}
\begin{algorithmic}[1]
\For{each training step}
\State $x_1 \sim \pdata$, $x_0 \sim p_0$, $(t,s) \sim \mathcal{U}([0,1]^2)$
\Comment{forward-only training uses $t \leq s$}
\State $x_t \gets (1-t)x_0 + tx_1$
\State $\Lmf(\theta)
= \left\|u_\theta(x_t,t,s)-\sg{(s-t)(\partial_t u(x_t,t,s) + \nabla_x u(x_t,t,s)v(x_t,t))+v(x_t,t)}\right\|^2$ 
\State $D_{t,s} \gets \mathsf{sg}((s-t)(\partial_t D_\theta(x_t,t,s) + \nabla_x D_\theta(x_t,t,s)v(x_t,t)) -\diverg(u_\theta(x_t,t,t)))$
\State $\Ldivmf(\theta) \gets \|D_\theta(x_t,t,s)  - D_{t,s}\|^2$
\State $\Lmfj(\theta) \gets \Lmf(\theta) + \Ldivmf(\theta)$
\State Update $\theta$ w.r.t. $\Lmfj(\theta)$
\EndFor
\State \Return $\theta$
\end{algorithmic}
\end{algorithm}

\iclrpar{F2D2 for LSD.} For F2D2 training, we also add an MLP scalar head for divergence prediction and the hidden sizes are 32 and 8. We change the learning rate to 1e-4 and use $50\%$ of each batch to the flow matching loss and $50\%$ to the self-distillation loss. To balance the velocity and divergence losses, we scale down the divergence distillation targets by a factor of 10{,}000. We train LSD-F2D2 for 60k iterations. All other configurations remain the same as LSD. Algorithm~\ref{alg:lsd-f2d2} shows the pseudo-code for LSD-F2D2 training.

\begin{algorithm}[t]
\caption{LSD-F2D2 Training}
\label{alg:lsd-f2d2}
\begin{algorithmic}[1]
\For{each training step}
\State $x_1 \sim \pdata$, $x_0 \sim p_0$, $(t,s) \sim \mathcal{U}([0,1]^2)$
\State $x_t \gets (1-t)x_0 + tx_1$
\State $x_s \gets x_t + (s-t)u_\theta(x_t,t,s)$
\State $\mathcal{L}_{\operatorname{LSD}}(\theta)
= \left\|\partial_s x_s - \sg{u_\theta(x_s,s,s)}\right\|^2$
\State $\mathcal{L}_{\operatorname{div-LSD}}(\theta) \gets \|D_\theta(x_t,t,s) - \sg{-\diverg(u_\theta(x_s,s,s)) - (s-t)\partial_s D_\theta(x_s,t,s)}\|^2$
\State $\mathcal{L}_{\operatorname{LSD-F2D2}}(\theta) \gets \mathcal{L}_{\operatorname{LSD}}(\theta) + \mathcal{L}_{\operatorname{div-LSD}}(\theta)$
\State Update $\theta$ w.r.t. $\mathcal{L}_{\operatorname{LSD-F2D2}}(\theta)$
\EndFor
\State \Return $\theta$
\end{algorithmic}
\end{algorithm}

\iclrpar{Maximum Likelihood Self-Guidance with F2D2.} We first predict the negative likelihood using randomly sampled noise with 1-step divergence prediction and take this negative likelihood as the loss. We then update the noise with one step of Adam, using a learning rate of $1\times10^{-3}$ for 1-step sampling and $5\times10^{-3}$ for 2-, 4-, and 8-step sampling. We then evaluate FID using 1, 2, 4, and 8 standard Euler steps for sampling.

\iclrpar{BPD.} We use the same method as \cite{lipman2022flow} to compute the BPD. To compute BPD with F2D2 models, we discretize the interval from 1 to 0 into equal segments, while the uniform step size serves as the second time input.

\iclrpar{2D Checkerboard.} We follow the experimental setup of \cite{boffi2025buildconsistencymodellearning}. The model is a 4-layer MLP with 512 hidden units per layer and GELU activations. Both the flow-matching baseline and the Shortcut model are trained for 150k iterations with a batch size of 100k and an initial learning rate of $10^{-3}$, using a square-root decay schedule after 35k steps. For the Shortcut model, each batch is divided in a $3:1$ ratio between the flow-matching loss and the self-distillation loss. For Shortcut-F2D2, we extend the output layer with an additional dimension and initialize training from the pretrained Shortcut model, running an extra 27k iterations. As for the LSD model, we train it for 200k iterations with a batch size of 10k while keeping all other configurations unchanged. For LSD-F2D2, we add an additional MLP head (hidden size 512) to predict divergence and scale the divergence targets by a factor of $0.1$. In each batch, we use a $1:1$ split between the flow-matching loss and self-distillation loss. We train LSD-F2D2 for 200k iterations using a learning rate of $1\times10^{-4}$ with a square-root decay schedule starting after 50k steps. All other configurations are kept the same.

\iclrpar{Computation Cost and Runtime.} We conduct all our training on an 8-GPU L40S node. To train a typical F2D2 CIFAR10 model, it takes around 1 day for the Flow-matching teacher, 1 day for the vanilla Shortcut or Shortcut-Distill baselines, and 5 hours for the F2D2 finetuning on top of it. For ImageNet64x64 models, it takes around 8 days for the flow matching teacher, 6 days for the vanilla shortcut or shortcut-distill model, and 5 hours for the F2D2 finetuning. For inference run time, we provide the wall-clock time to generate 50k samples for each dataset using different numbers of NFEs in Table~\ref{tab:sampling_speed}.

\begin{table}[t]
\caption{Sampling speed (wall-clock time and per-image latency) for different NFEs.}
\label{tab:sampling_speed}
\centering
\resizebox{\textwidth}{!}{%
\begin{tabular}{@{}llcccc@{}}
\toprule
Dataset        & Hardware      & 8 NFE                 & 4 NFE                 & 2 NFE                 & 1 NFE                 \\ \midrule
CIFAR-10       & 4$\times$L40S & 146 s (2.92 ms/img)   & 83 s (1.66 ms/img)    & 51 s (1.02 ms/img)    & 40 s (0.80 ms/img)    \\
ImageNet-64$\times$64 & 8$\times$L40S & 289 s (5.78 ms/img)   & 160 s (3.20 ms/img)   & 95 s (1.90 ms/img)    & 63 s (1.26 ms/img)    \\ \bottomrule
\end{tabular}
}
\end{table}

\section{Additional Ablations}
\label{app:ablations}

In this section, we present additional ablation study to further investigate the bevavior of our F2D2 models.

With variance around $1.8\times 10^6$ each time step, the Hutchinson estimator does produce values that are significantly larger in magnitude than the sampling range. Therefore, as we have mentioned in the Appendix~\ref{app:implemtation}, we follow common deep learning practices and apply a $5\times10^{-5}$ scaling factor to the divergence target in order for the neural network to better learn the prediction. Here in Table~\ref{tab:scaling_ablation}, we present an ablation study on the divergence scaling factor to validate this choice of hyperparameter. As we can observe, appropriate scaling enables simultaneous calibrated likelihood prediction and high sample quality, while scaling too strong can produce invalid NLL and scaling too weak can degrade overall performance. 

Figure~\ref{fig:multi-step-guidance} shows the self-guidance sampling algorithm with multi-step Adam optimization. As we can observe, while adding optimization steps does not further improve the FID, the performance also does not collapse. We are interested in other variants of our self-guidance algorithm to better improve the performance as exciting future work directions.

Finally, we conduct additional study on the effect that different loss components have on the sample quality as well as likelihood estimation accuracy. In particular, we slightly modify ~\Eqref{eq:scobj} to add an additional weighting scalar $\lambda$ to $\Ldivvm + \Ldivsc$ and compare the performance under various weighting.
\begin{equation}
\Lscj(\theta) := \Lfmsd(\theta) + \Lsc(\theta) + \lambda(\Ldivvm(\theta) + \Ldivsc(\theta)) \label{eq:scobj_weighted}
\end{equation}
Table~\ref{tab:loss_ablation} shows the Shortcut-Distill-F2D2 results with $\lambda = 0.01,0.1,1,10$, with $\lambda=1$ equivalent to our original setting and the performance of Shortcut-Distill without F2D2 as reference. As we can observe, adding an additional loss weighting can potentially help F2D2 achieve a better sweet spot in balancing sample quality and likelihood estimation: While our original equal weighting loss ($\lambda=1$) obtains the most calibrated likelihood, it slightly penalizes the FID scores. Similarly, significantly down-weighting ($\lambda=0.01$) further improves FID but loses the ability to produce accurate likelihood. However, when choosing an appropriate $\lambda$ (in particular when $\lambda=0.1$), this additional weighting can enable the model to produce both FID scores that match or even slightly improve the original Shortcut-Distill performance, and relatively calibrated log-likelihood estimation.

The experiment in Table~\ref{tab:loss_ablation} opens the door to many intriguing future research directions that can potentially further improve the performance. These include more fine-grained loss weighting scheme and developing training schedules similar to the ones proposed in~\citet{frans2024one} and~\citet{geng2025mean}, where the flow matching objective and the self-consistency objective are optimized in separate portions of the training. 

\begin{table}[t]
\caption{Ablation study on the divergence scaling factor. }
\label{tab:scaling_ablation}
\centering
\begin{tabular}{@{}cccccccccc@{}}
\toprule
\multirow{2}{*}{Scaling} & \multicolumn{2}{c}{8 Steps} & \multicolumn{2}{c}{4 Steps} & \multicolumn{2}{c}{2 Steps} & \multicolumn{2}{c}{1 Step} \\ \cmidrule(l){2-9}
                         & NLL  & FID   & NLL  & FID   & NLL  & FID   & NLL   & FID   \\ \midrule
$5\times10^{-4}$         & \underline{3.27} & \textbf{5.16}  & \underline{2.43} & \underline{6.31}  & \underline{1.26} & \textbf{7.06}  & {\color{lightgray}-2.92} & \textbf{13.16} \\
$5\times10^{-5}$         & \textbf{3.12} & \underline{5.68}  & \textbf{2.87} & \textbf{5.96}  & \textbf{2.38} & \underline{7.35}  & \textbf{1.62}  & \underline{13.76} \\
$5\times10^{-6}$         & 5.38 & 15.57 & 5.44 & 25.20 & 5.50 & 10.96 & \underline{5.60}  & 23.18 \\ \bottomrule
\end{tabular}
\end{table}

\begin{figure}[t]
    \centering
    \includegraphics[width=0.8\linewidth]{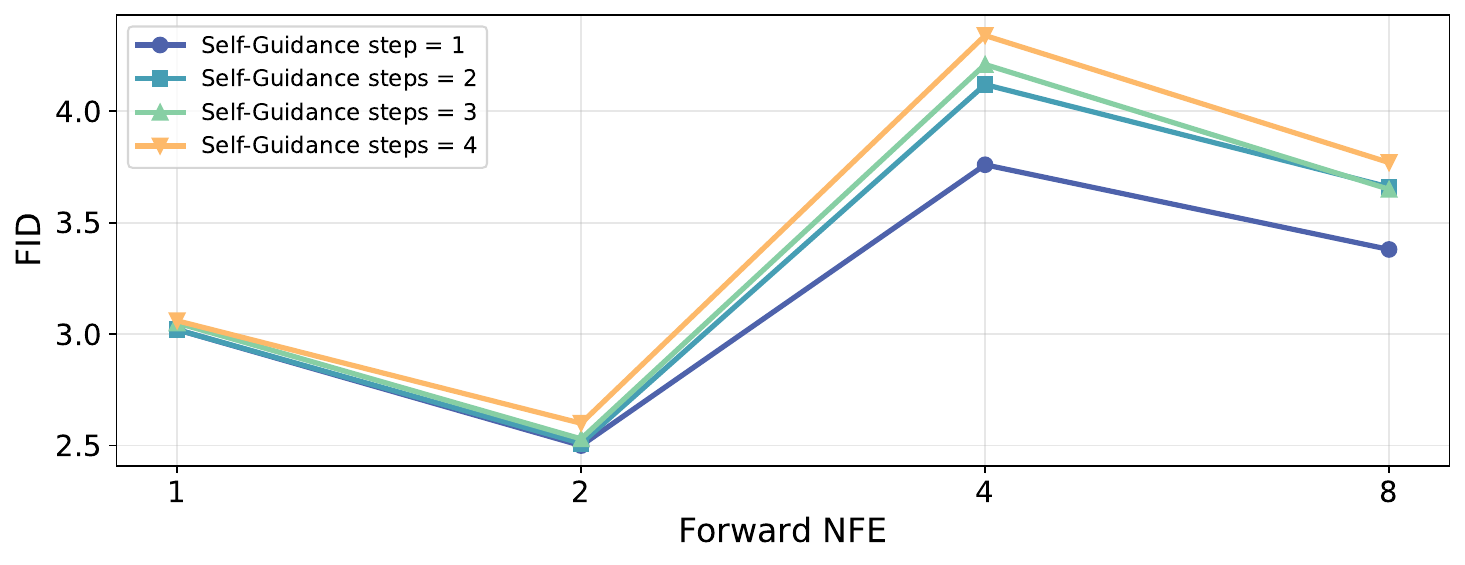}
    \caption{FID results of multi-step self-guidance sampling on CIFAR10.}
    \label{fig:multi-step-guidance}
\end{figure}

\begin{table}[H]
\caption{Ablation study on different loss weighting on $\Ldivvm + \Ldivsc$ in Shortcut-Distill-F2D2.}
\label{tab:loss_ablation}
\centering
\begin{tabular}{@{}ccccccccc@{}}
\toprule
\multirow{2}{*}{Scaling} & \multicolumn{2}{c}{8 Steps} & \multicolumn{2}{c}{4 Steps} & \multicolumn{2}{c}{2 Steps} & \multicolumn{2}{c}{1 Step} \\ \cmidrule(l){2-9}
                         & NLL  & FID   & NLL  & FID   & NLL  & FID   & NLL  & FID   \\ \midrule
Shortcut-Distill               & {\color{lightgray}-11.42}& 5.01        & {\color{lightgray}-26.82}& 5.41        & {\color{lightgray}-57.72}& 7.13         & {\color{lightgray}-119.42}& 12.75       \\\midrule
10    & 1.47  & 11.66 & 0.82  & 11.23 & {\color{lightgray}-0.07} & 10.11 & {\color{lightgray}-2.19} & 18.27 \\
1     & \textbf{3.12}  & 5.68  & \textbf{2.87}  & 5.96  & \textbf{2.38}  & 7.35  & \textbf{1.92}  & 13.76 \\
0.1   & \underline{2.59}  & \underline{5.01}  & \underline{2.00}  & \underline{5.33}  & \underline{1.78}  & \underline{7.01}  & \underline{0.68}  & \underline{13.04} \\
0.01  & 2.04  & \textbf{4.87}  & 1.51  & \textbf{5.28}  & 0.59  & \textbf{6.76}  & {\color{lightgray}-1.68} & \textbf{12.70} \\ \bottomrule
\end{tabular}
\end{table}

\section{Additional Results}
In this section, we present additional experimental results. Specifically, in Table~\ref{tab:per_sample} we showcase the NLL per sample error w.r.t. the teacher prediction on CIFAR10. As we can observe, the F2D2 variants lower the error by 7-171$\times$ in comparison to the baselines. This evaluation directly verifies that our F2D2 can produce sample-level calibrated likelihood estimations that are suitable for practical applications, not merely matching the summary statistics.

Figure~\ref{fig:loss-curve} demonstrate the training dynamics of our Shortcut-Distill-F2D2 model with the training loss curve, where we can observe stable training with expected small fluctuations native to uniformly random timestep selection at each iteration.

We also provide additional 2D checkerboard results in Figure~\ref{fig:2d} and~\ref{fig:backward-2d-flow} and qualitative image generation results in Figure~\ref{fig:imgnet_demo},\ref{fig:img-2-8},\ref{fig:img-2-2},\ref{fig:img-3-8},\ref{fig:img-3-2},\ref{fig:our-2-8},\ref{fig:our-2-2},\ref{fig:our-3-8},\ref{fig:our-3-2},\ref{fig:shortcut-3-8},\ref{fig:shortcut-3-2},\ref{fig:meanflow-2},\ref{fig:meanflow-1},\ref{fig:meanflow-refine-2},\ref{fig:meanflow-refine-1},\ref{fig:lsd-8},\ref{fig:lsd-2},\ref{fig:lsd-f2d2-8},\ref{fig:lsd-f2d2-2}. In particular, Figure~\ref{fig:2d} shows the generated sample density via forward integration and Figure~\ref{fig:backward-2d-flow} shows the data sample density via backward integration using different models.
All images shown in this section are non-cherry picked results.

\begin{table}[H]
\caption{NLL Mean absolute per sample error w.r.t the teacher's prediction in BPD on CIFAR10. It shows that the F2D2 variants lower the error by 7-171$\times$ in comparison to the baselines.}
\label{tab:per_sample}
\centering
\begin{tabular}{@{}ccccc@{}}
\toprule
Method & 8 steps    & 4 steps    & 2 steps    & 1 step     \\ \midrule
Shortcut                & 15.11 & 30.97 & 62.85 & 126.58 \\
Shortcut-Distill        & 14.47 & 29.79 & 60.61 & 121.99 \\
Meanflow                & 12.02 & 24.24 & 49.33 & 100.03 \\\midrule
Shortcut-F2D2           & \underline{0.52}  & \underline{0.55}  & \underline{1.01}  & 3.08   \\
Shortcut-Distill-F2D2   & \textbf{0.41}  & \textbf{0.50}  & \textbf{0.90}  & \underline{1.69}   \\
MeanFlow-F2D2           & 0.80& 1.78& 1.51& \textbf{0.60}   \\ \bottomrule
\end{tabular}
\end{table}

\begin{table}[H]
\caption{NLL Mean absolute per sample error w.r.t the teacher's prediction in BPD on CelebA-64. }
\label{tab:per_sample_celeba}
\centering
\begin{tabular}{@{}ccccc@{}}
\toprule
Method & 8 steps    & 4 steps    & 2 steps    & 1 step     \\ \midrule
LSD               & 8.45& 16.59& 34.46& 71.64\\
LSD-F2D2           & \textbf{0.18}  & \textbf{0.24}& \textbf{0.33}& \textbf{0.53}\\
\bottomrule
\end{tabular}
\end{table}

\begin{figure}[H]
    \centering
    \includegraphics[width=\linewidth]{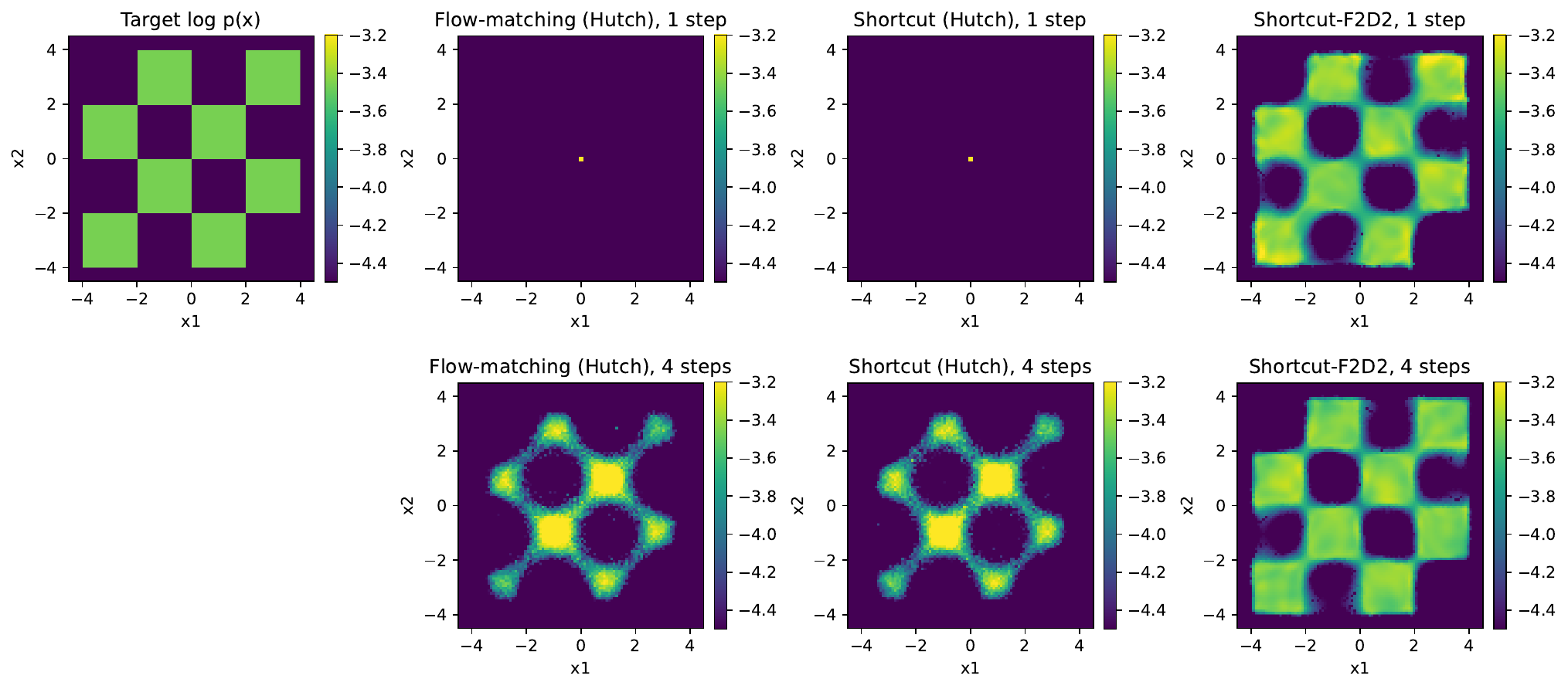}
    \caption{Log-likelihood comparison on 2D checkerboard generated samples among different models using forward integration.}
    \label{fig:2d}
\end{figure}

\begin{figure}[H]
    \centering
    \includegraphics[width=0.8\linewidth]{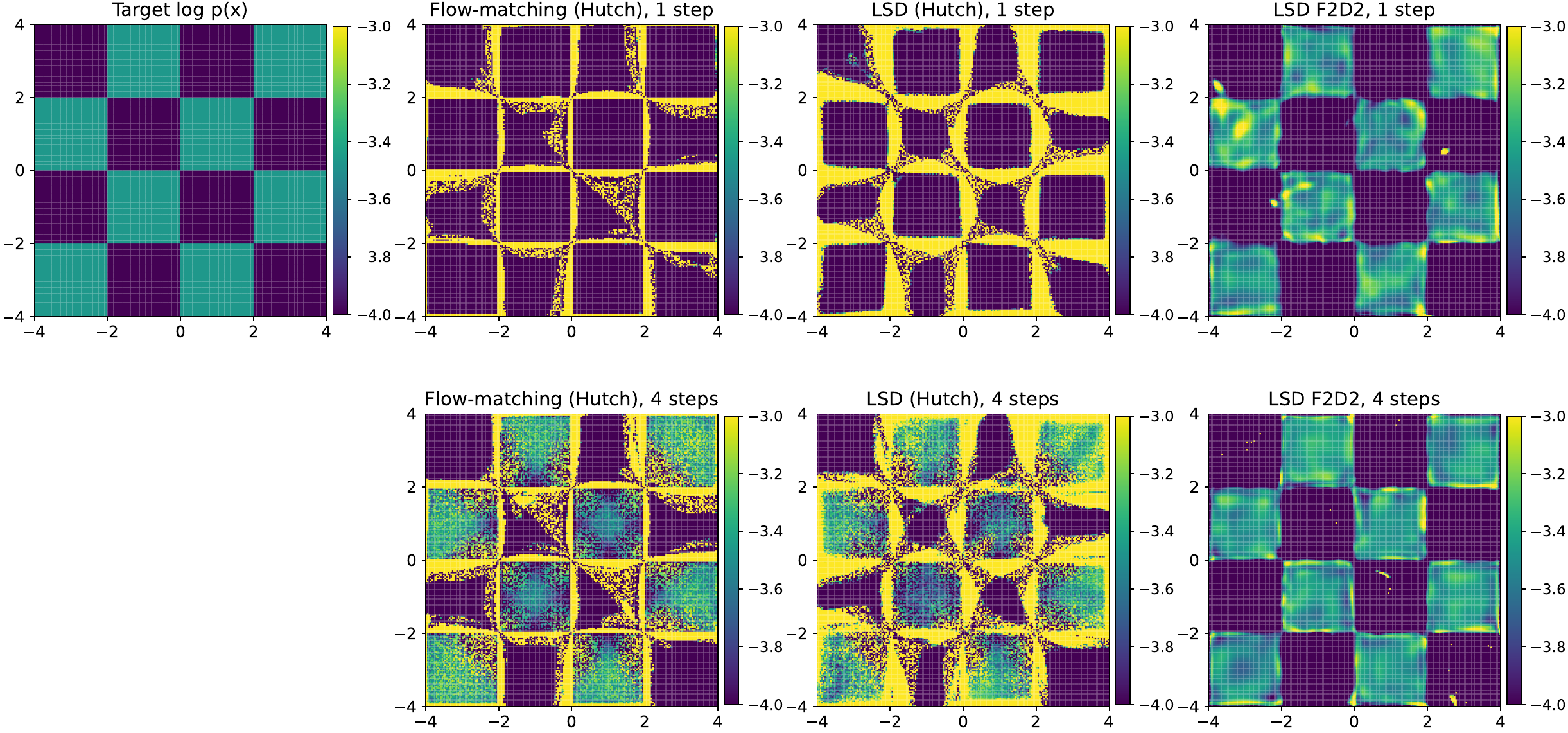}
    \caption{Log-likelihood comparison on 2D checkerboard ground truth dataset among different models using backward integration.}
    \label{fig:backward-2d-flow}
\end{figure}

\begin{figure}[H]
    \centering
    \includegraphics[width=\linewidth]{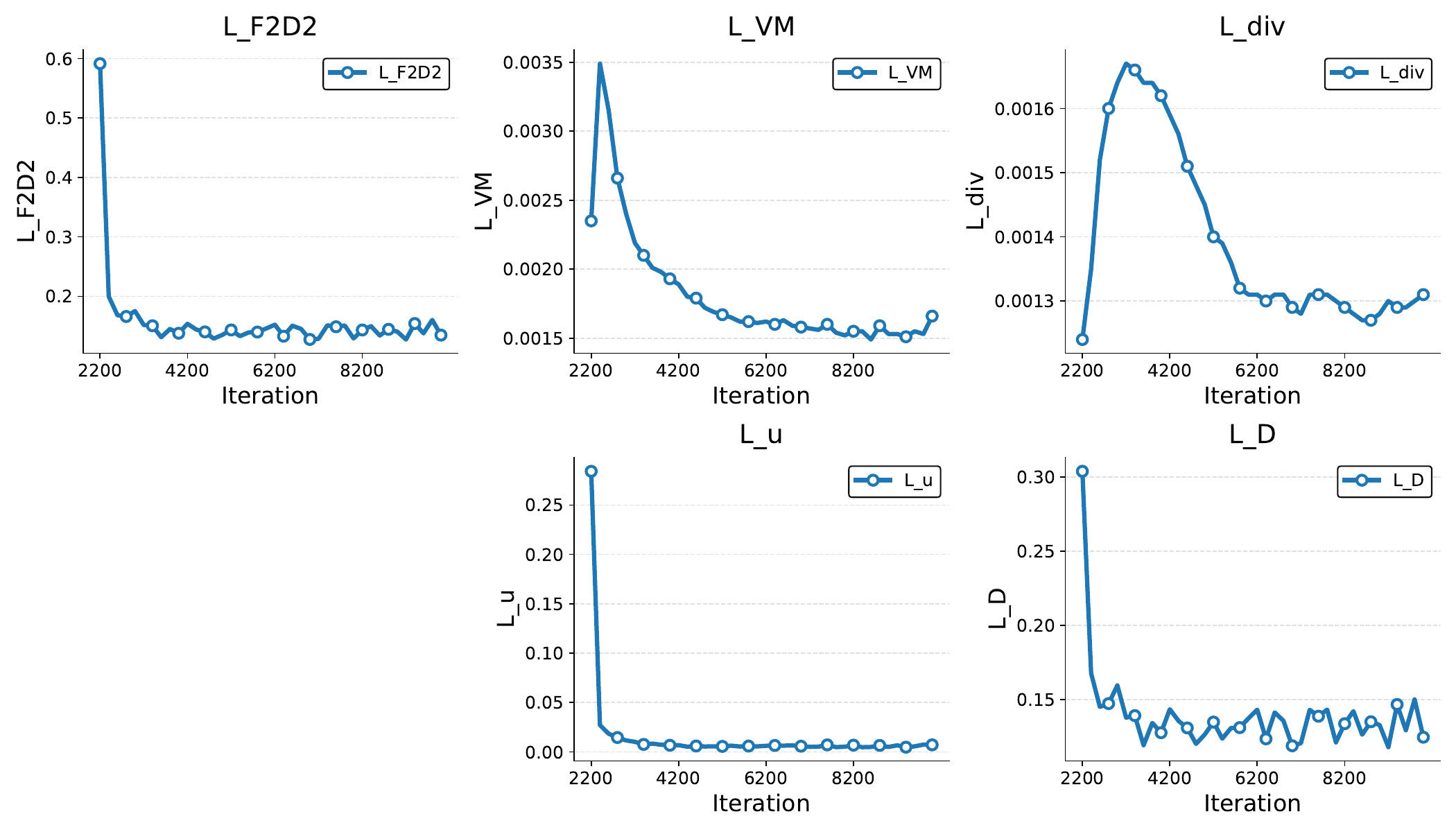}
    \caption{The Shortcut-Distill-F2D2 loss curve on CIFAR10.}
    \label{fig:loss-curve}
\end{figure}

\begin{figure}[H]
    \centering
    \includegraphics[width=\linewidth]{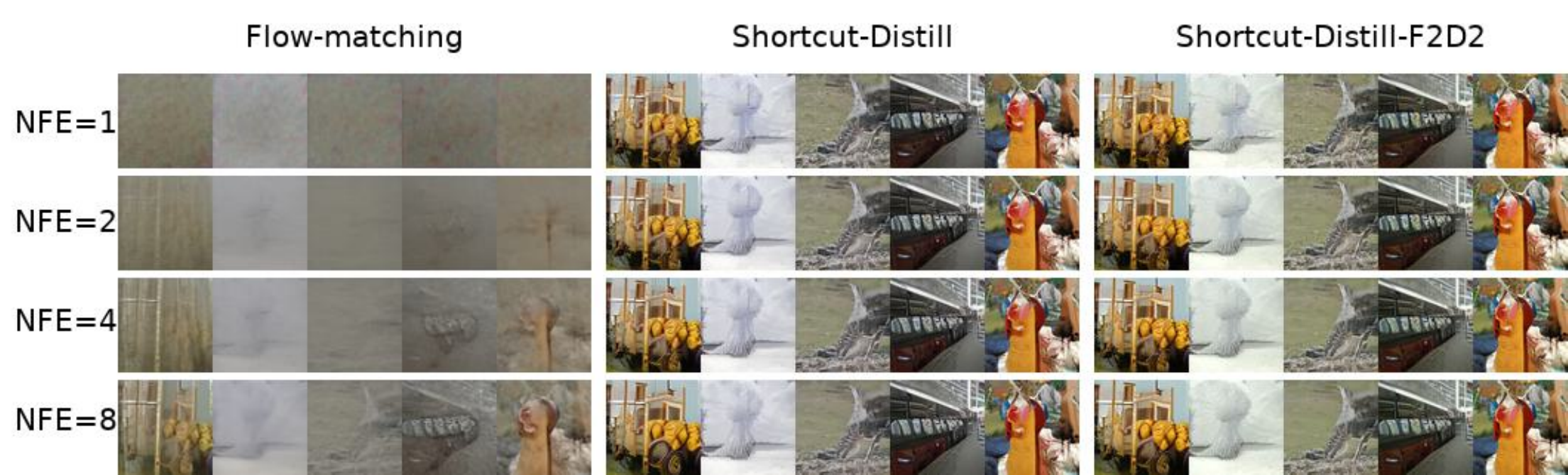}
    \caption{Imagenet $64\times64$ unconditional generation.}
    \label{fig:imgnet_demo}
\end{figure}

\begin{figure}[t]
    \centering
    \includegraphics[width=\linewidth]{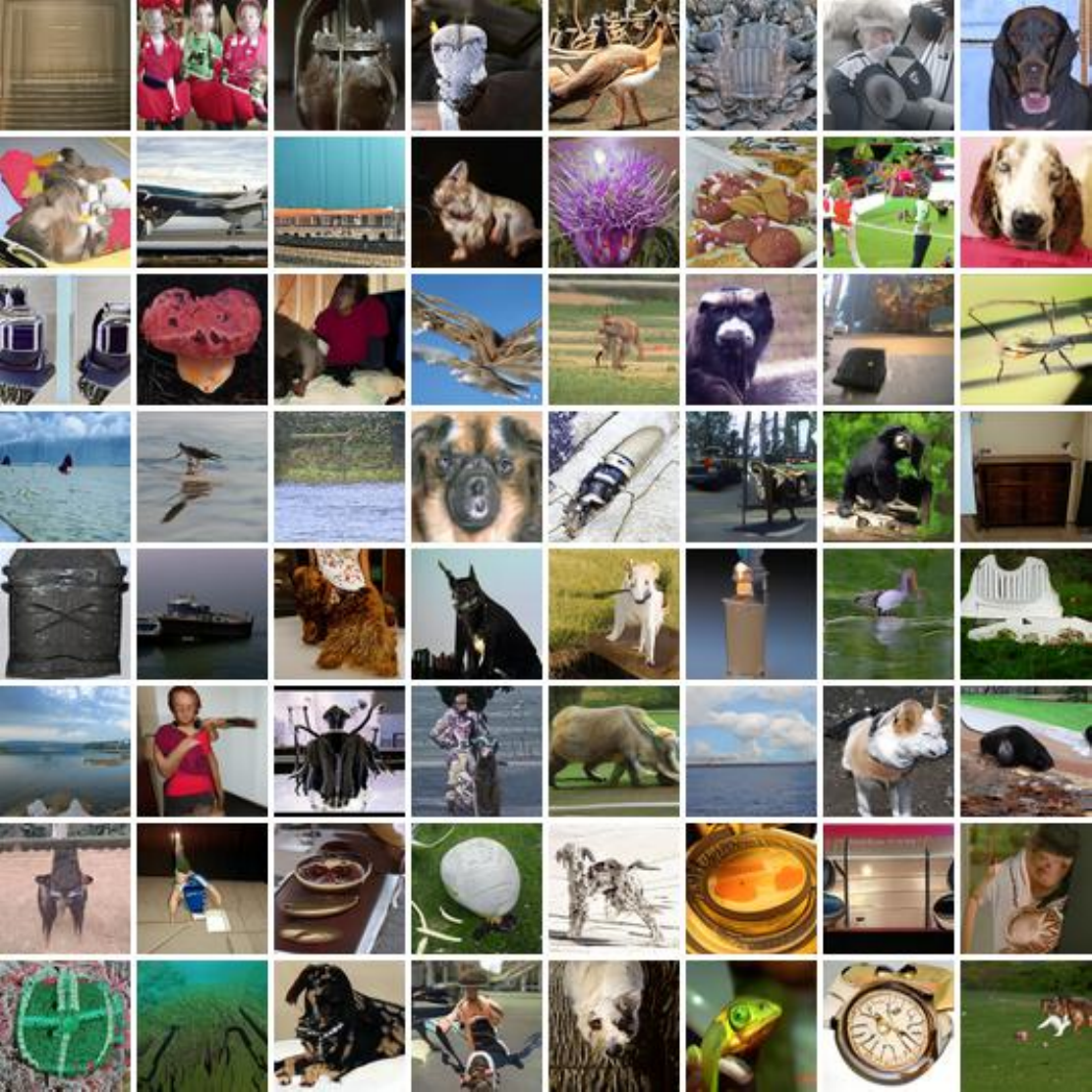}
    \caption{8-step unconditional ImageNet $64\times64$ generation with our Shortcut-Distill. }
    \label{fig:img-2-8}
\end{figure}

\begin{figure}[t]
    \centering
    \includegraphics[width=\linewidth]{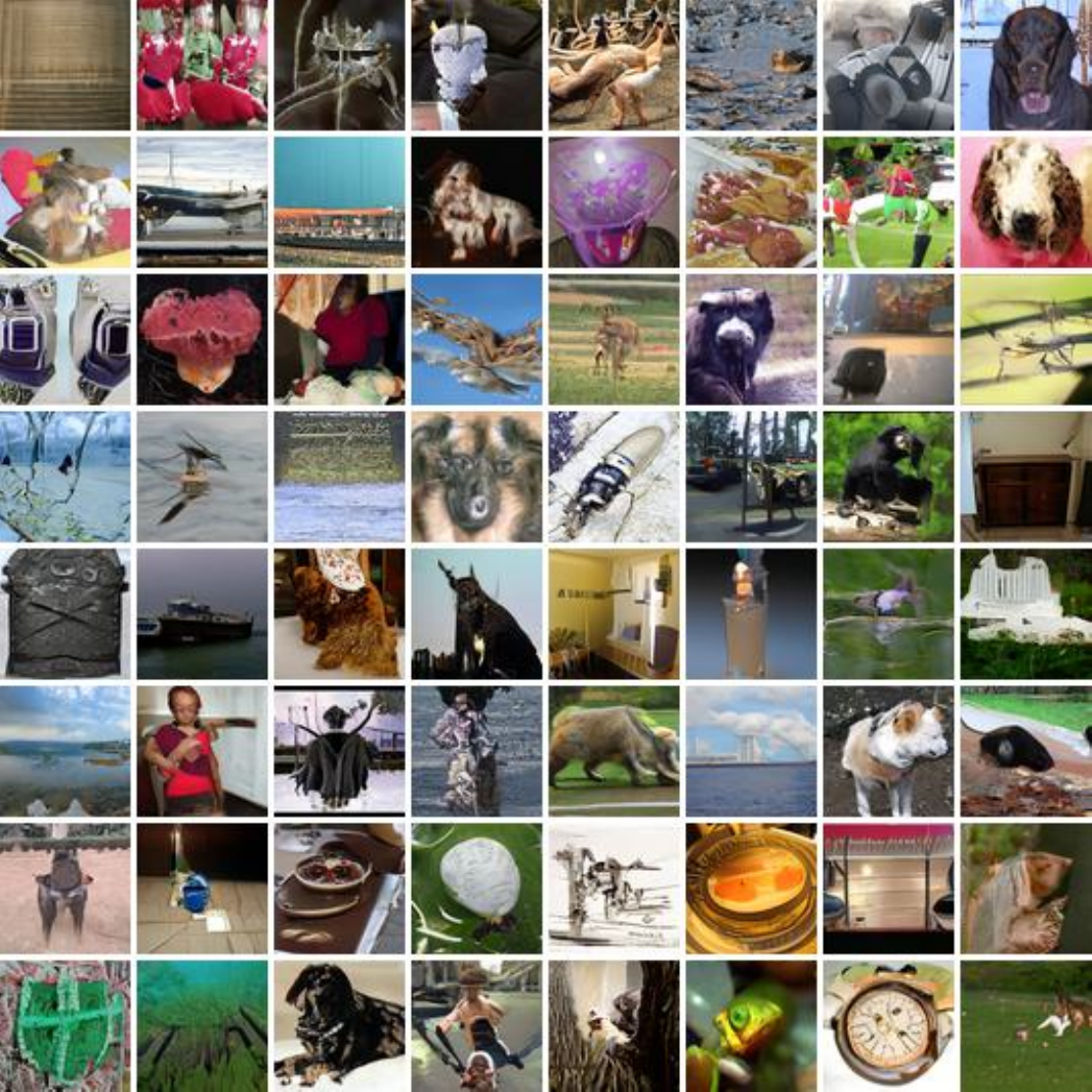}
    \caption{2-step unconditional ImageNet $64\times64$ generation with our Shortcut-Distill. }
    \label{fig:img-2-2}
\end{figure}

\begin{figure}[t]
    \centering
    \includegraphics[width=\linewidth]{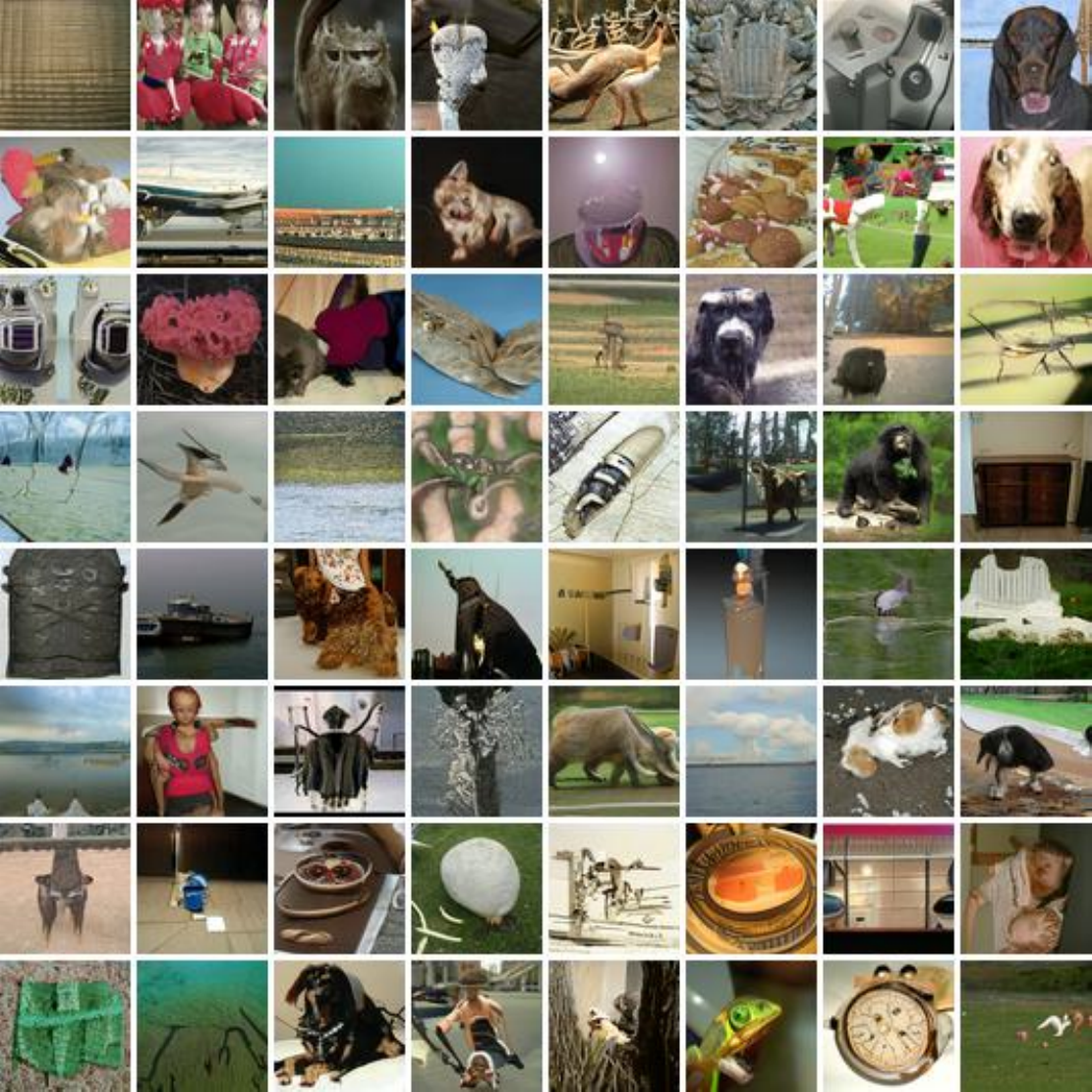}
    \caption{8-step unconditional ImageNet $64\times64$ generation with our Shortcut-Distill-F2D2. }
    \label{fig:img-3-8}
\end{figure}

\begin{figure}[t]
    \centering
    \includegraphics[width=\linewidth]{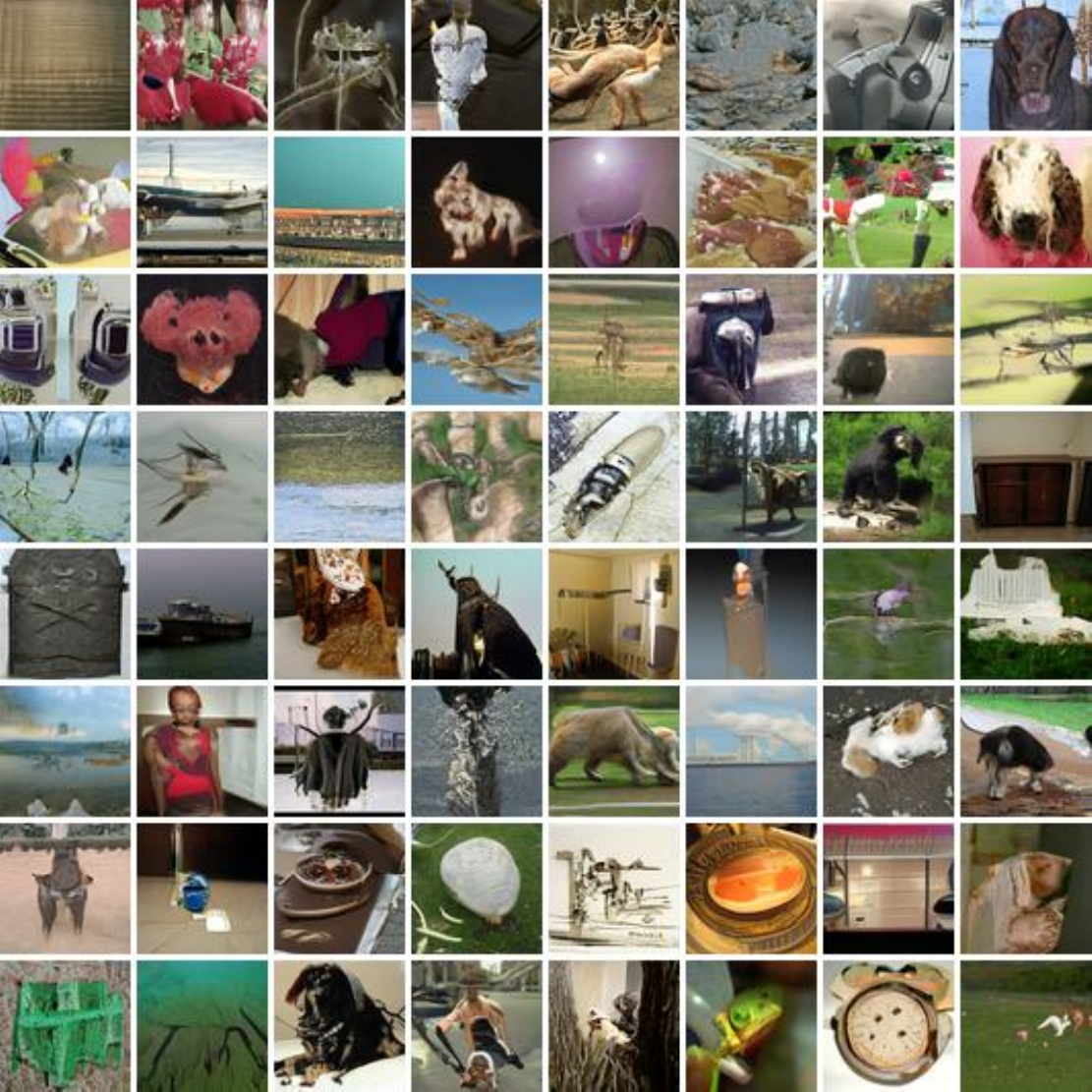}
    \caption{2-step unconditional ImageNet $64\times64$ generation with our Shortcut-Distill-F2D2. }
    \label{fig:img-3-2}
\end{figure}

\begin{figure}[t]
    \centering
    \includegraphics[width=\linewidth]{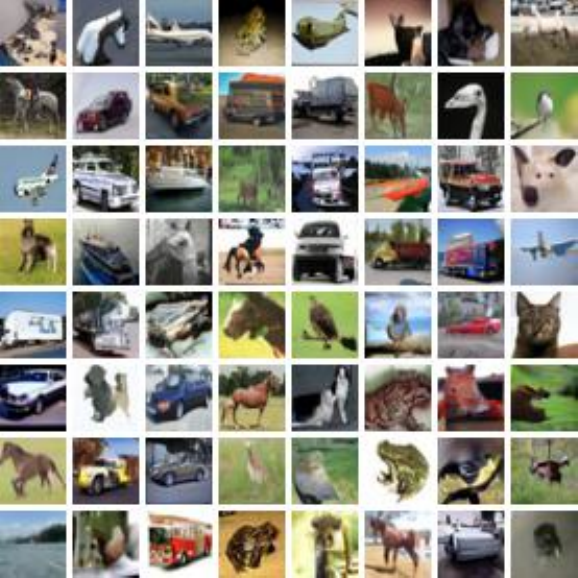}
    \caption{8-step unconditional CIFAR-10 generation with our Shortcut-Distill. }
    \label{fig:our-2-8}
\end{figure}

\begin{figure}[t]
    \centering
    \includegraphics[width=\linewidth]{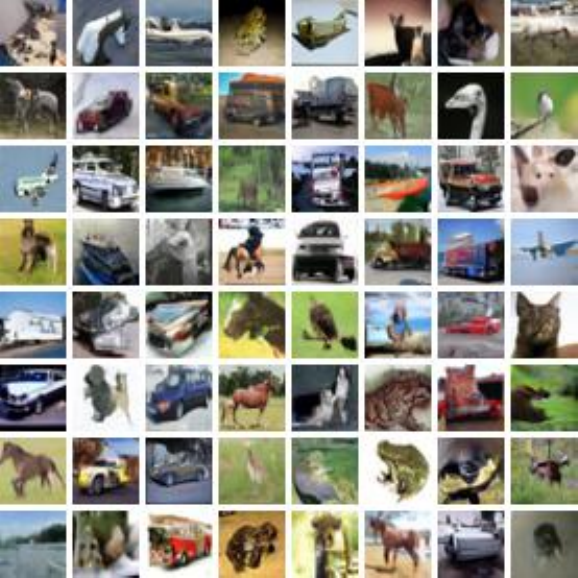}
    \caption{2-step unconditional CIFAR-10 generation with our Shortcut-Distill. }
    \label{fig:our-2-2}
\end{figure}

\begin{figure}[t]
    \centering
    \includegraphics[width=\linewidth]{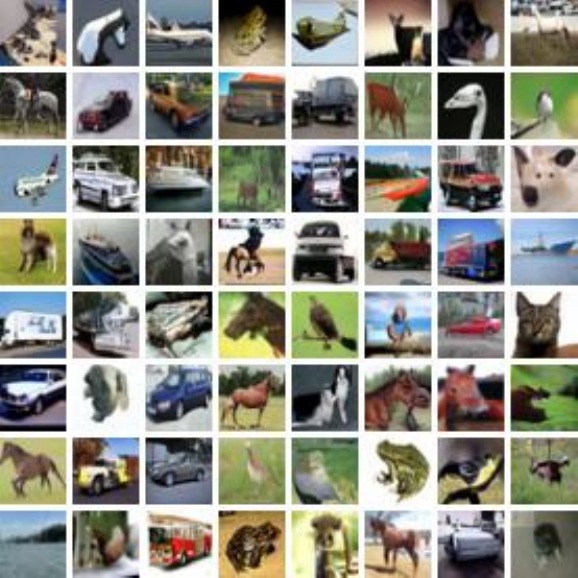}
    \caption{8-step unconditional CIFAR-10 generation with our Shortcut-Distill-F2D2. }
    \label{fig:our-3-8}
\end{figure}

\begin{figure}[t]
    \centering
    \includegraphics[width=\linewidth]{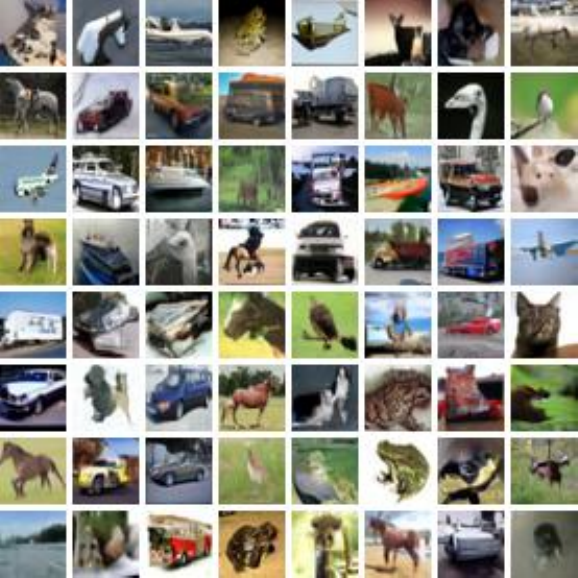}
    \caption{2-step unconditional CIFAR-10 generation with our Shortcut-Distill-F2D2. }
    \label{fig:our-3-2}
\end{figure}

\begin{figure}[t]
    \centering
    \includegraphics[width=\linewidth]{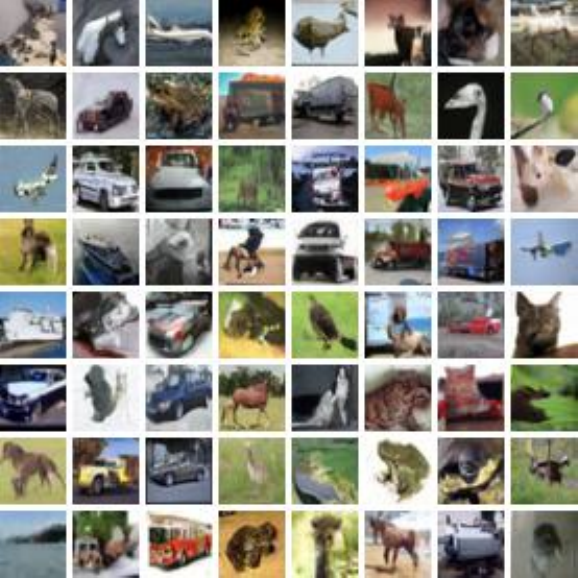}
    \caption{8-step unconditional CIFAR-10 generation with our Shortcut-F2D2. }
    \label{fig:shortcut-3-8}
\end{figure}

\begin{figure}[t]
    \centering
    \includegraphics[width=\linewidth]{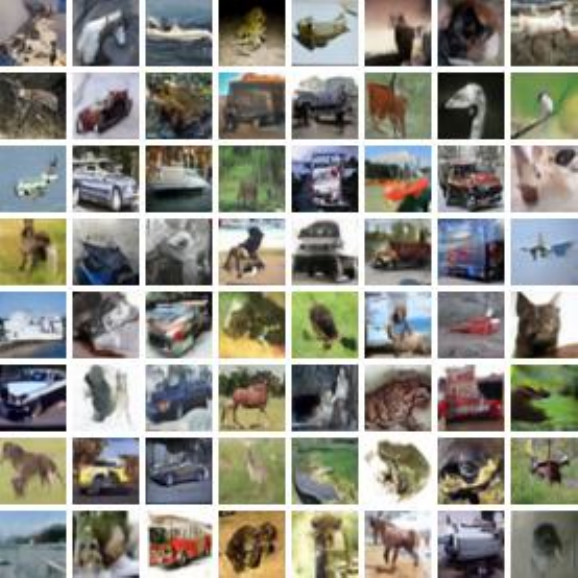}
    \caption{2-step unconditional CIFAR-10 generation with our Shortcut-F2D2. }
    \label{fig:shortcut-3-2}
\end{figure}

\begin{figure}[t]
    \centering
    \includegraphics[width=\linewidth]{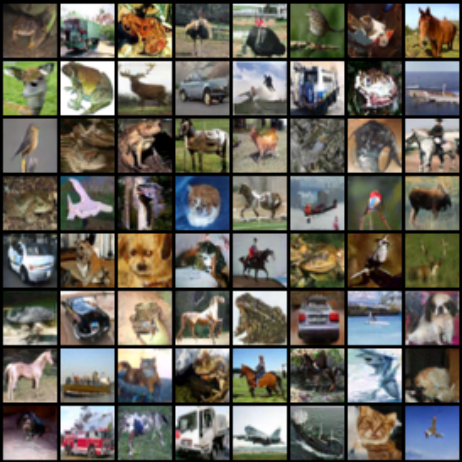}
    \caption{2-step unconditional CIFAR-10 generation with MeanFlow-F2D2 (Ours). }
    \label{fig:meanflow-2}
\end{figure}

\begin{figure}[t]
    \centering
    \includegraphics[width=\linewidth]{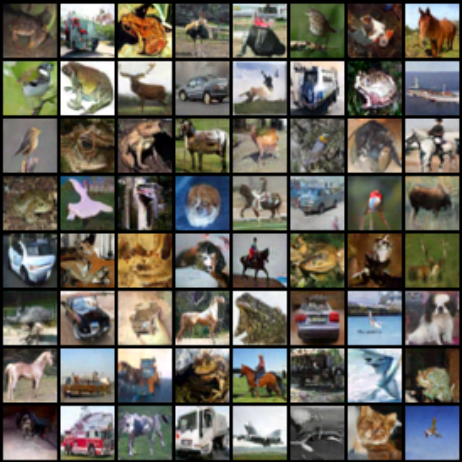}
    \caption{1-step unconditional CIFAR-10 generation with MeanFlow-F2D2 (Ours). }
    \label{fig:meanflow-1}
\end{figure}

\begin{figure}[t]
    \centering
    \includegraphics[width=\linewidth]{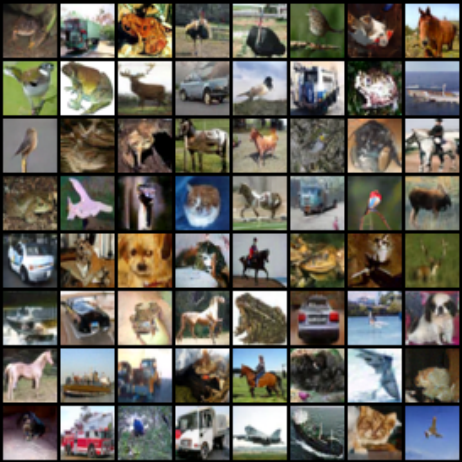}
    \caption{2-step unconditional CIFAR-10 generation with our MeanFlow-F2D2-Self-Guidance. }
    \label{fig:meanflow-refine-2}
\end{figure}

\begin{figure}[t]
    \centering
    \includegraphics[width=\linewidth]{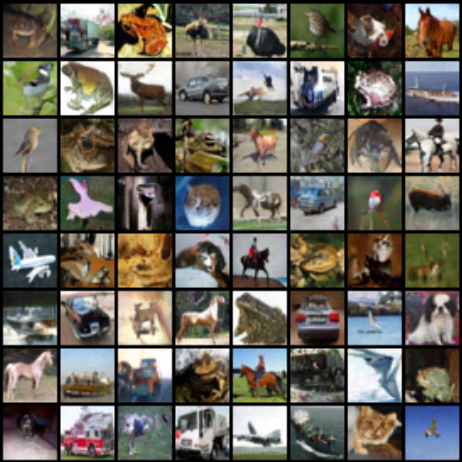}
    \caption{1-step unconditional CIFAR-10 generation with our MeanFlow-F2D2-Self-Guidance. }
    \label{fig:meanflow-refine-1}
\end{figure}

\begin{figure}[t]
    \centering
    \includegraphics[width=\linewidth]{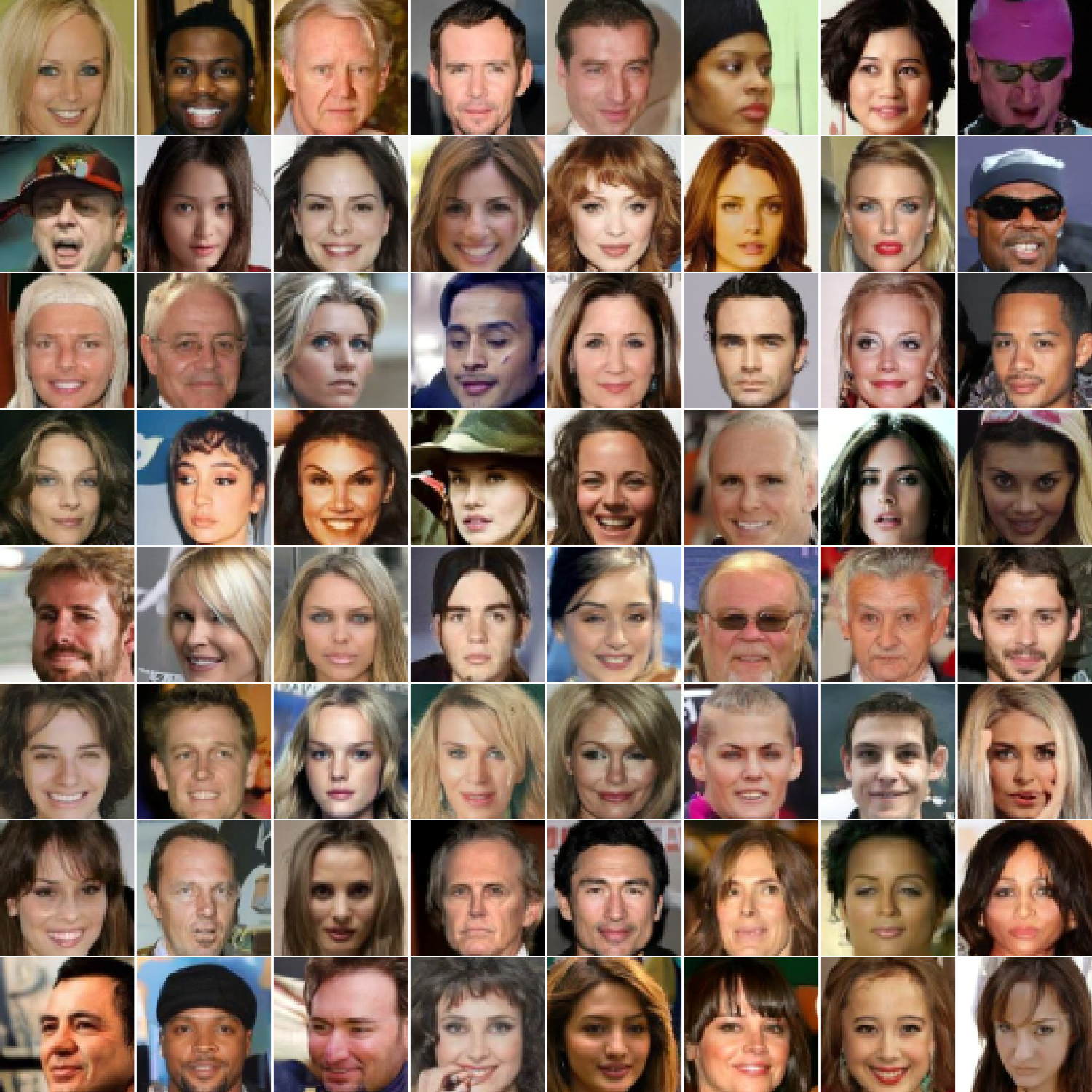}
    \caption{8-step CelebA-64 generation with our LSD. }
    \label{fig:lsd-8}
\end{figure}

\begin{figure}[t]
    \centering
    \includegraphics[width=\linewidth]{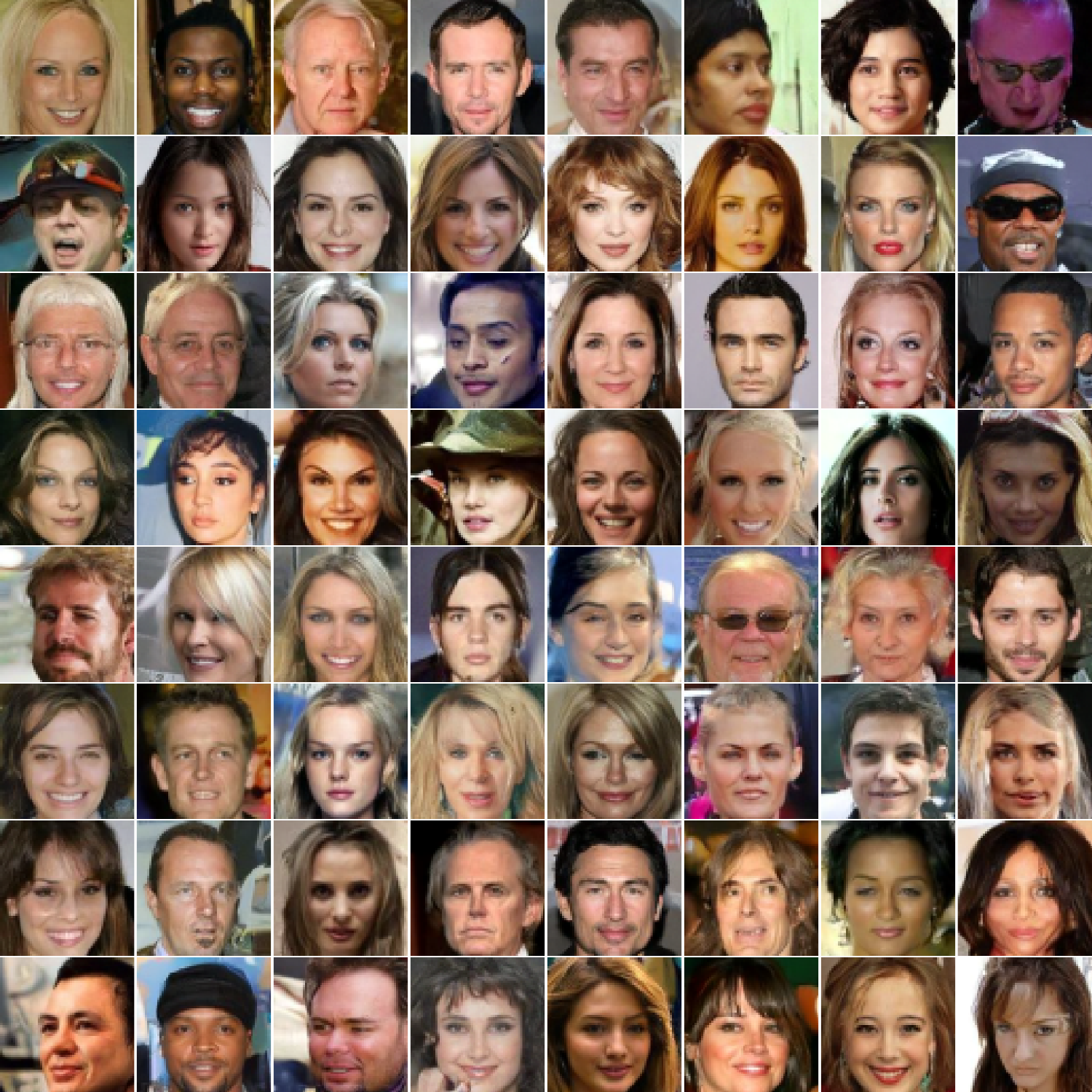}
    \caption{2-step CelebA-64 generation with our LSD. }
    \label{fig:lsd-2}
\end{figure}

\begin{figure}[t]
    \centering
    \includegraphics[width=\linewidth]{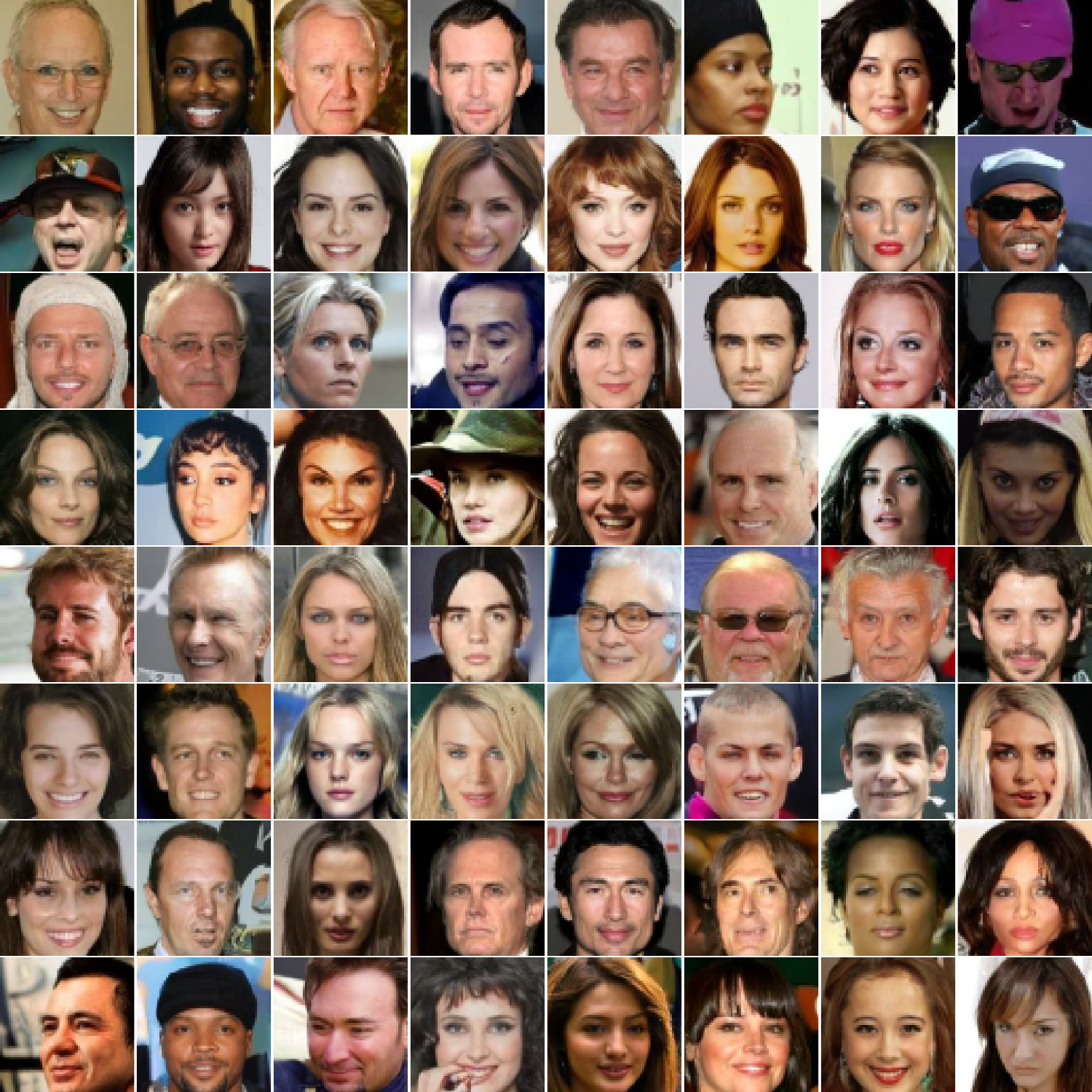}
    \caption{8-step CelebA-64 generation with our LSD-F2D2. }
    \label{fig:lsd-f2d2-8}
\end{figure}

\begin{figure}[t]
    \centering
    \includegraphics[width=\linewidth]{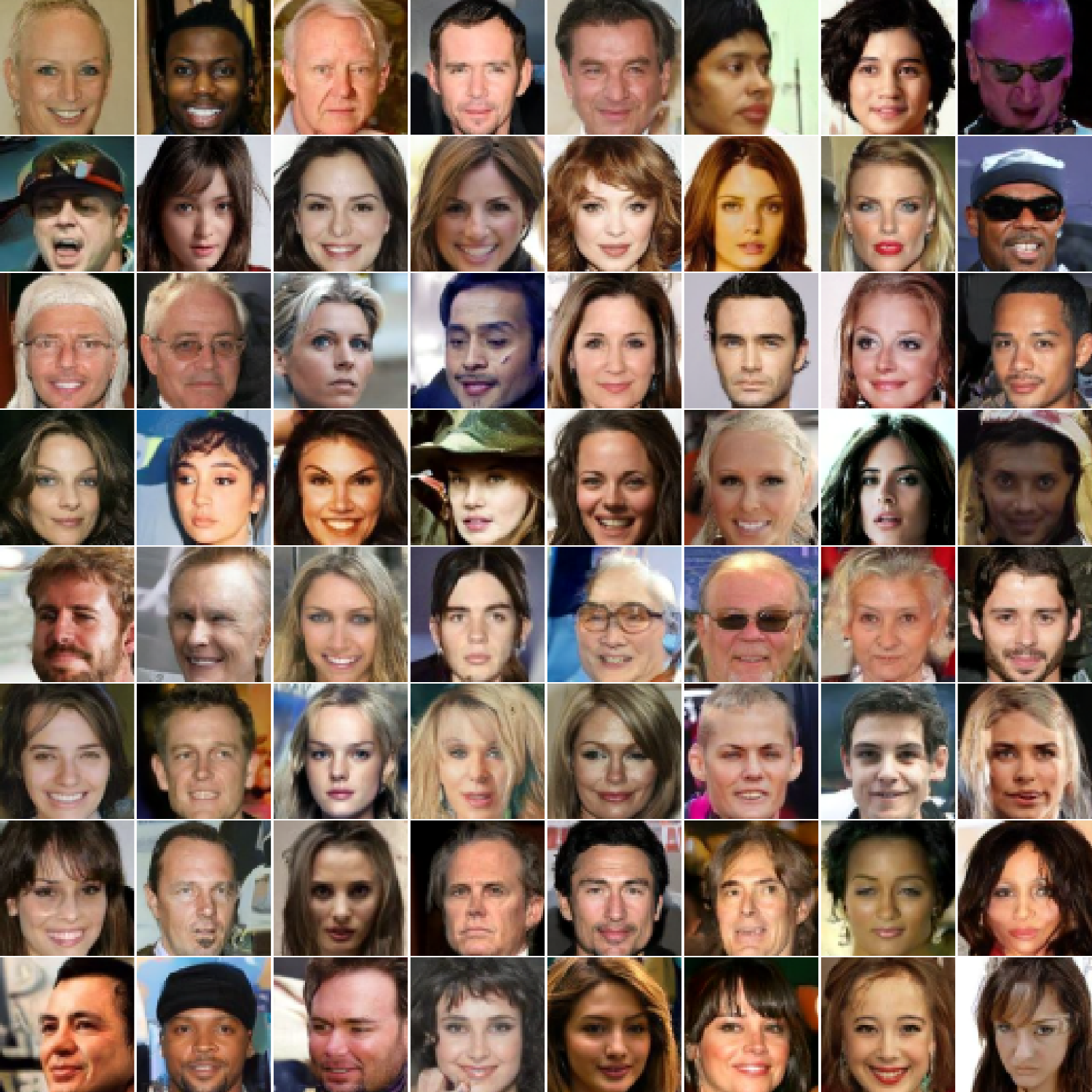}
    \caption{2-step CelebA-64 generation with our LSD-F2D2. }
    \label{fig:lsd-f2d2-2}
\end{figure}

\clearpage
\section{Limitations and Future Works}
In this section, we discuss the limitations and future works of our method. First, \modelname{} training requires careful early stopping: once the calibrated BPD value is reached, further training can potentially lead to overfitting or degraded likelihood estimation. Future work could address this through improved network architecture and auxiliary regularization. Second, on the ImageNet-$64\times64$ dataset, due to computational resources constraints, we only train with reduced model size and insufficient training iterations rather than an ideal large-scale configuration. We expect training with longer duration and larger models can further improve the performance. We also restrict ourselves to unconditional generation, which is substantially more challenging than conditional setups. Finally, the performance of our method is sensitive to divergence target scaling and its practical effectiveness in other architectures or modalities remains to be fully validated in future works.

\section{The Use of Large Language Models (LLMs)}
We use Large Language Models (LLMs) to refine and polish the manuscript. LLMs also support our code debugging, but they are not involved in developing the algorithms or conducting the experiments. The authors take full responsibility for the content of manuscript.

\end{document}